\def\proj{\operatorname{Proj}}  
\DeclareMathOperator{\var}{Var}
\def\Prob{\operatorname{Prob}}
\def\CB{\mathcal B} 
\def\CC{\mathcal C} 
\def\CM{\mathcal M} 
\def\CN{\mathcal N} 
\def\EE{\mathbb{E}} 
\def\RR{\mathbb{R}} 
\def\NN{\mathbb{N}} 
\def\ZZ{\mathbb{Z}} 
\def\bz{\mathbf{z}} 
\def\f{\frac}      
\def \md {\mathrm{d}} 
\def\sph{\mathbb{S}^{d-1}} 
\def\s{\sigma}
\def\l{\lambda}
\def\rr{\mathbb{R}}
\def\nn{\mathbb{N}}
\def\lz{\lambda}
\def\dz{\delta}
\def\ez{\epsilon}
\def\bz{\beta}
\def\sz{\sigma}
\def\r{\right}
\def\lf{\left}
\def\fz{\infty}
\def\etat{{\eta}}
\newtheorem{thm}{Theorem}
\newtheorem{lem}{Lemma}
\newtheorem{prop}{Proposition}
\newtheorem{example}{Example}
\newtheorem{remark}{Remark}
\theoremstyle{definition}
\newtheorem{definition}{Definition}
\newcommand{\thmref}[1]{Theorem~\ref{#1}}
\newcommand{\lemref}[1]{Lemma~\ref{#1}}
\newcommand{\propref}[1]{Proposition~\ref{#1}}
\begin{document}

\title{Spherical Analysis of Learning Nonlinear Functionals}



\author{ Zhenyu Yang$^1$,
Shuo Huang$^2$ \thanks{ Corresponding author.
\newline  Email addresses: \texttt{zhy.yang@my.cityu.edu.hk} (Z. Yang), \texttt{shuo.huang@iit.it} (S. Huang),\\ \texttt{hanfeng@cityu.edu.hk} (H. Feng), \texttt{dingxuan.zhou@sydney.edu.au} (D.-X. Zhou).} \hspace*{2pt},
Han Feng$^1$, 
Ding-Xuan Zhou$^3$ 
}

\date{
$^1$ Department of Mathematics, City University of Hong Kong, \\Kowloon, Hong Kong SAR\\
$^2$  Istituto Italiano di Tecnologia, Genoa, Italy\\
$^3$  School of Mathematics and Statistics, University of Sydney,\\ Sydney NSW 2006, Australia
}
	
\maketitle

\begin{center}
\begin{minipage}{13.5cm}
{\small{\textbf{Abstract}}\quad }
In recent years, there has been growing interest in the field of functional neural networks. They have been proposed and studied with the aim of approximating continuous functionals defined on sets of functions on Euclidean domains. In this paper, we consider functionals defined on sets of functions on spheres. The approximation ability of deep ReLU neural networks is investigated by novel spherical analysis using an encoder-decoder framework. An encoder comes up first to accommodate the infinite-dimensional nature of the functional's domain. It utilizes spherical harmonics to help us extract the latent finite-dimensional information of functions, which in turn facilitates in the next step of approximation analysis using fully connected neural networks. Moreover, real-world objects are frequently sampled discretely and are often corrupted by noise. Therefore, encoders with discrete input and those with discrete and random noise input are constructed, respectively. The approximation rates with different encoder structures are provided therein.
\end{minipage}
\end{center}

\vspace{0.2cm}
	
\section{Introduction}
Deep neural networks are a type of machine learning models which have been the most powerful and practical learning algorithms in our day-to-day life. These models are routinely used across multiple domains, such as machine translation \cite{vaswani2017attention}, computer vision \cite{he2016deep}, and human-machine interaction \cite{hinton2012deep}. These applications, although diverse, share a common underlying theme: they all involve data, which are essentially discrete samples derived from continuous functions. From this perspective, using deep neural networks in these contexts can be viewed primarily as a tool for processing samples of continuous objects. Consequently, a critical area of study within this domain is the examination of deep neural networks' ability to approximate function operators. Specifically, this involves estimating the error when approximating an operator that maps one class of functions to another.
	

During the past few years, there has been a large literature demonstrating the approximation ability of deep neural networks for functions on domains $\mathcal{X}$ of Euclidean spaces $\rr^n$.  
Considering the deep neural networks (DNN) with $J$ layers for an input $x \in X \subset \RR^d$, the output of the $J$-th layer is given by
\begin{equation}\label{equ:fnn}
	h^{(J)}(x) = F^{(J)}\cdot \sigma(F^{(J-1)} \cdots  \sigma(F^{(1)}x +b^{(1)}) + \cdots + b^{(J-1)})+b^{(J)},
\end{equation}
where $F^{(j)} \in \RR^{d_j \times d_{j-1}}$ is the connection matrix with width $d_j \in \NN_+$ for the $j$-th layer, $b^{(j)} \in \RR^{d_j}$ is the bias and $\sigma(x)=\max\{0,x\}$ is the rectified linear unit (ReLU) activation function acting componentwise.
Recent studies have demonstrated that deep ReLU neural networks exhibit high expressive powers in approximating functions in Sobolev spaces \cite{yarotsky2017error, schmidt2020nonparametric,suzuki2019adaptivity,siegel2023optimal,yang2024nonparametric}. Especially there are also some results demonstrating the advantages of structured neural networks for specific function classes \cite{poggio2017and,zhou2020universality,fang2020theory,mao2021theory,montavon2011kernel}.

	
Despite the enormous success of deep neural networks in function approximation, the underlying theory for applying neural networks to learning functionals remains largely unexplored, though some attempts have been made.
For the classical functional regression problems \cite{morris2015functional,chen2011single}, splines, wavelets, or link functions are used to learn a regression map from a function to a scalar response and to establish convergence rates. For recent prevalent needs, such as solving partial differential equations (PDEs) using neural networks \cite{lu2021learning,bhattacharya2021model,kovachki2023neural}, where the goal is to learn a map from parametric function space $\mathcal{U}$ to the PDE solution space $\mathcal{V}$, encoder-decoder networks are utilized to extract latent finite-dimensional structures. This is achieved by adopting two encoder/decoder pairs on $\mathcal{U}$ and $\mathcal{V}$ to tackle the infinite-dimensional input and output domains.
One of the first operator network architectures was introduced in \cite{chen1995universal}.
It was extended to a deep learning network, DeepONet, by \cite{lu2021learning} and showed that DeepONet could accurately predict the behavior of complex systems.
And \cite{seidman2022nomad} formulated a novel operator learning architecture with nonlinear decoders.
Recently,  \cite{liu2024generalization} introduced an Auto-Encoder-based Neural Network (AENet), which is efficient for input functions on a nonlinear manifold.
The work \cite{kovachki2024operator} further advanced the field of operator learning by providing a comprehensive analysis of algorithms used in this domain.
From another approximation viewpoint, \cite{song2023approximation(a),song2023approximation(b),zhou2024approximation} gave the quantitative rates of $O\left((\log M)^{\frac{-\beta \lambda}{d}}\right)$ and $O\left((\log M)^{\frac{-\beta s}{d}}\right)$ of approximating Lipschitz-$\lz$ continuous nonlinear functionals and smooth functionals of order $s$ defined on the continuous function $C^\beta([-1,1]^d)$ using deep neural networks in terms of the total number of weights $M$. There have also been attempts to address the curse of dimensionality in functional learning. 
	

The study of neural networks’ capability to approximate nonlinear operators is also critical when dealing with functions defined on non-Euclidean domains, such as spherical surfaces, or when signals are compromised by noise and discretization. These complexities present challenges that necessitate advanced analytical approaches to explore the corresponding approximation results. The choice of a spherical surface as a domain for data is motivated by its practical importance in fields such as planetary sciences, astronomy, and global telecommunications. There have been some results studying the approximation ability of neural networks for estimating functions defined on spheres \cite{fang2020theory,feng2023generalization}.
Moreover, real-world objects are frequently sampled discretely and are often corrupted by noise, which can significantly degrade the quality of data processing and analysis. Addressing these issues requires a robust framework capable of handling the intricacies of discrete and noisy data while maintaining high fidelity.
	
Amid these challenges, we utilize an encoder-decoder framework in this paper, which has been widely adopted in recent advancements in deep learning. Our research then focuses on three pivotal aspects:
\begin{itemize}
    \item The first result concerns orders of approximating continuous operators $F$ defined on smooth function class $W_p^r(\sph)$ on the spherical domain $\sph$ with modulus of continuity satisfying $\omega_F(a)\leq ca^\lambda$. Roughly speaking, we first introduce a linear operator $V_n:L_p(\sph) \to \Pi_{2n}^{d-1}(\sph)$ onto the space $\Pi_{2n}^{d-1}(\sph)$ of polynomials of degree up to $2n$ over the unit sphere $\sph$ as \eqref{equ:Ln_def} followed by an isometric isomorphism map $\phi_n: \Pi_{2n}^{d-1}(\sph) \to \RR^{t_n}$. Then define the encoder ${E} = \phi_n \circ V_n : L_p(\sph) \to \RR^{t_n}$ given by \eqref{equ:ecd}, which is utilized to extract some finite-dimensional information of the impact function $f$. After that, the encoder-decoder network framework is constructed by $D \circ {E}$ with $D:\RR^{t_n} \to \RR$  being a deep neural network. For a more detailed statement, please refer to \thmref{thm:thm1}. {{This result can also be extended to more general settings as illustrated by Remark \ref{rem:rem1}.}}
    \item Second, from a practical application perspective, the input functions are often discretized, which motivates us to consider the discrete input for the encoder. \thmref{thm:discreteinput} provides approximation estimates associated with discrete inputs. The encoder $\widehat{E}:\RR^{M} \to \RR^{t_n}$ is given by $\widehat{E} = \phi_n \circ \widehat{V}_{n,M}$, where $\widehat{V}_{n,M}$ given by \eqref{def:dis-l} is defined on the function values $\{f(\xi_1), \ldots, f(\xi_M)\}$ at some spherical points $\{\xi_j\}_{j=1}^M$ admitting the cubature rule rather than on the function itself. Then $D \circ \widehat{E}$ composes our neural network for approximating the continuous functional.
    {Remark \ref{lem:lem2} demonstrates that this result can be extended to other manifolds given certain conditions.}
    \item In the third aspect, we analyze the approximation accuracy under the impact of noise on inputs. Considering a set of random variables  $\epsilon=\{\epsilon_j\}_{j=1}^{M'}$ with zero mean and values ranging in $[-1,1]$, the input for the encoder $\widetilde{E}$ now becomes $\{\epsilon_j+f(\xi_j)\}_{j=1}^{M'}$. Consequently, the domain of the decoder $D$ is now also composed of random variables, and the approximation rate given by \thmref{thm:discretenoise} is illustrated in a probabilistic sense.
\end{itemize}
By systematically analyzing the approximation stages and the associated error metrics, our approach aims to provide a comprehensive toolkit for researchers and practitioners to understand the power of deep neural networks in dealing with functionals on spherical data. 
	
This paper is structured as follows: Section \ref{main} describes our main results for different encoder-decoder approximation frameworks for continuous operators $F$ defined on $W_p^r(\sph)$ in detail, { {demonstrating sufficient conditions for accuracy estimation}}. Section \ref{proof} presents the proofs for different theorems, propositions, and lemmas. Section \ref{sec:conclusion} is a summary of the paper. Table \ref{table:notation}, located at the end, includes some key notations used throughout.

\section{Main results}\label{main}
In this article, we mainly demonstrate the approximation ability of deep neural networks with an encoder-decoder framework for continuous functionals $F$ defined on some smooth function classes over the (closed) unit sphere $\sph \subset \RR^d$.
The estimation rate of functional $F$ defined on the Sobolev space $W_p^r(\sph), 1 \leq p \leq \infty, r>0,$ over a spherical domain will be illustrated first by our encoder-decoder network in Section \ref{subsec:SobSphere}. {{Actually, the argument is also applicable in a broader functional space, as long as some very general conditions are met. Examples of some common function spaces are also provided in Remark \ref{rem:rem1}.}} Moreover, as functions in real-world applications are frequently sampled discretely and are often corrupted by noise, we construct the encoder $\widehat{E}: \RR^M \to \RR^{t_n}, M,t_n \in \ZZ_+,$ with discrete input, i.e., function values at some spherical points and $\widetilde{E}: \RR^{M'} \to \RR^{t_n}, M' \in \ZZ_+$ underlying discrete input with noise, and give the approximation rates in subsection \ref{subsec:discrete} and subsection \ref{subsec:dis&noise}, respectively.

\subsection{Estimation for continuous functional defined on Sobolev space}\label{subsec:SobSphere}
In this part, we demonstrate the estimation rates for the continuous functional $F$ defined on a Sobolev space $W_p^r(\sph)$ underlying the unit sphere $\sph$ with respect to the total number of trainable parameters $\CN$ of the neural network. 
Tools from spherical harmonics and approximation theory play an essential role. In particular, the following preliminaries are needed.

	
For $1\leq p\leq \infty$, we denote by $L_p(\mathbb{S}^{d-1})=L_p(\mathbb{S}^{d-1},\s_d)$ the $L_p$-function space defined with respect to the normalized Lebesgue measure $\s_d$ on $\mathbb{S}^{d-1}$, and $\|\cdot\|_p$ the norm of $L_p(\sph)$.

A spherical harmonic of degree $k\in\ZZ_+$ on $\sph$ is the restriction to $\sph$ of a homogeneous and harmonic polynomial of total degree $k$ defined on $\RR^{d}$.
Let $\mathcal{H}_k^d$ denote the set of all spherical harmonics of degree $k$ on $\sph$. It is well-known in \cite{dai2013approximation} the dimension of the linear space $\mathcal{H}_k^d$ is
\begin{align*}
    C(k,d) =   {\binom{k+d-1}{k}}-{\binom{k+d-3}{k-2}}=O(k^{d-2}).
\end{align*}

Observe that $L_2(\mathbb{S}^{d-1})$ forms a Hilbert space. For any functions $f$ and $g$ in $L_2(\mathbb{S}^{d-1})$, their inner product is defined as
$
\langle f, g \rangle_2 := \int_{\mathbb{S}^{d-1}} f(x) g(x) \, \md\sigma_d(x).
$
The subspaces $\mathcal{H}_k^d$, where $k \in \mathbb{Z}_+$, correspond to spherical harmonics and are mutually orthogonal with respect to this inner product in $L_2(\mathbb{S}^{d-1})$. Since spherical polynomials are dense in $L_2(\mathbb{S}^{d-1})$, every function $f \in L_2(\mathbb{S}^{d-1})$ can be expanded as a series of spherical harmonics,
\[
f = \sum_{k=0}^\infty \text{Proj}_k f = \sum_{k=0}^{\infty} \sum_{l=1}^{C(k,d)} \widehat{f}_{k,l} Y_{k,l},
\]
where the convergence is in the $L_2(\mathbb{S}^{d-1})$ norm. Here, $\{Y_{k,l}\}_{l=1}^{C(k,d)}$ forms an orthonormal basis for $\mathcal{H}_k^d$, and $\widehat{f}_{k,l}$ are the Fourier coefficients of $f$ defined by
\begin{align}\label{equ:integral_hatf}
    \widehat{f}_{k,l} := \langle f, Y_{k,l} \rangle_{L_2(\mathbb{S}^{d-1})} := \int_{\mathbb{S}^{d-1}} f(x) Y_{k,l}(x) \, \md\sigma_d(x).
\end{align}

The spaces $\mathcal{H}_k^d$  of spherical harmonics can also be characterized as
eigenfunction spaces of the Laplace-Beltrami operator $\Delta_0$ on
$\sph$. Indeed,
\begin{align*}
    \mathcal{H}_k^d=\{f \in C^2(\mathbb{S}^{d-1}):\Delta_0 f=-\l_k f\},
\end{align*}
where $\lambda_k=k(k+d-2)$ and $C^2(\mathbb{S}^{d-1})$ denotes the space of all twice continuously differentiable functions on $\mathbb{S}^{d-1}$.
Now we define the Sobolev space $W_p^r(\mathbb{S}^{d-1})$ to be a subspace of $L_p(\mathbb{S}^{d-1})$, $1\leq p\leq \infty$, $r>0$, with the finite norm
\begin{align}\label{Sobolev_norm}
    \| f \|_{W_p^r(\mathbb{S}^{d-1})}:=\left\| (-\Delta_0+I)^{r/2} f\right\|_p 
    {\asymp} \sum_{k=0}^{\infty}(1+\lambda_k)^{\frac{r}{2}} \left(\sum_{l=1}^{C(k,d)}|\widehat{f}_{k,l}|^p\right)^{\frac{1}{p}}.
\end{align}
Here $A \asymp B$ means there are positive constants $C_1, C_2 $  such that $C_1 B\leq A\leq C_2 B$. 

The main step of spherical approximation is to introduce an essential linear operator $$V_n: L_p(\sph) \to \Pi_{2n}^{d-1}(\sph)$$
mapping from $L_p$ to polynomials of degree up to $2n$.
\begin{definition}\label{def:Vn}
Given a $C^\infty\left([0,\infty)\right)$ function $\eta$ with $\eta(t) = 1$ for $0 \leq t \leq 1$ and $\eta(t) = 0$ for $t \geq 2$, we define a sequence of linear operators $V_n$, $n\in\ZZ_+$, on $L_p(\mathbb{S}^{d-1})$  with $1\leq p \leq \infty$ by
\begin{align} \label{equ:Ln_def}
V_n(f)(x) :=&\sum_{k=0}^\infty \eta\left(\f k n\right)\proj_k(f)(x)=
\sum_{k=0}^\infty\eta\left(\f k n\right) \sum_{l=1}^{C(k,d)}\widehat{f}_{k,l}Y_{k,l}(x) \notag \\
= & \sum_{k=0}^{2n} \sum_{l=1}^{C(k,d)}\eta\left(\f k n\right)\widehat{f}_{k,l}Y_{k,l}(x).
\end{align}
\end{definition}
For simplicity, we replace the multi-index set $\{k,l\}$ by the usual $\{1,2, \ldots, t_n\}$ with an order arranged by the total degree and $t_n=\sum_{k=0}^{2n} \sum_{l=1}^{C(k,d)} 1 = O(n^{d-1})$. The following lemma shows that the linear operator $V_n$ is near-best optimal for $f \in W_p^r(\sph)$.

\begin{lem}\label{lem:bestapproximationerror}
For $n\in \NN$, $r>0$, $1\leq p\leq \infty$ and $f\in W^r_p\left(\mathbb{S}^{d-1}\right)$, there holds
\begin{equation*}\label{equation:bestapproximationerror}
	\left\|f - V_n(f)\right\|_{L_p(\sph)} \leq C_3 n^{-r} \left\|f\right\|_{W^r_p(\sph)},
\end{equation*}
{where $C_3$ is a constant depending on $d$.}
\end{lem}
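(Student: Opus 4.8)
The plan is to deduce the bound from two classical facts about the filtered operator $V_n$: (i) it reproduces spherical polynomials of degree at most $n$, and (ii) it is bounded on $L_p(\sph)$ uniformly in $n$. Together these make $V_n$ a near-best approximation operator, and the stated rate then follows from a Jackson-type inequality on the sphere.

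First I would record the reproduction property. Since $\eta(t)=1$ for $0\le t\le 1$, formula \eqref{equ:Ln_def} gives $\proj_k(V_n f)=\proj_k f$ for every $k\le n$, hence $V_n P=P$ for all $P\in\Pi_n^{d-1}(\sph)$; equivalently
$$ f-V_n f=\sum_{k>n}\Bigl(1-\eta\bigl(\tfrac kn\bigr)\Bigr)\proj_k f, $$
so $V_n$ annihilates the low-frequency part of $f$ entirely. Next I would establish $\|V_n f\|_p\le C\|f\|_p$ with $C$ independent of $n$. The operator $V_n$ is convolution (in the spherical sense) against the filtered kernel $\mathcal{K}_n(x\cdot y)=\sum_{k\ge 0}\eta(k/n)\,Z_k^d(x\cdot y)$, where $Z_k^d$ is the reproducing kernel of $\mathcal{H}_k^d$. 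The smoothness of $\eta$ permits repeated summation by parts, yielding the pointwise decay estimate $|\mathcal{K}_n(\cos\theta)|\le C_m\, n^{d-1}(1+n\theta)^{-m}$ for every $m$, and hence $\sup_{x\in\sph}\int_{\sph}|\mathcal{K}_n(x\cdot y)|\,\md\sigma_d(y)\le C$ uniformly in $n$. By Young's inequality for spherical convolution this gives the uniform $L_p$ bound. This kernel estimate, which is precisely where the $C^\infty$ hypothesis on $\eta$ enters, is the technical heart of the argument; I would either carry out the summation-by-parts estimate explicitly or invoke the corresponding result from \cite{dai2013approximation}.

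With these two ingredients in hand the rest is routine. For any $P\in\Pi_n^{d-1}(\sph)$,
$$\|f-V_n f\|_p=\|(f-P)-V_n(f-P)\|_p\le\bigl(1+\|V_n\|_{p\to p}\bigr)\|f-P\|_p,$$
so $\|f-V_n f\|_p\le C\,E_n(f)_p$ where $E_n(f)_p$ denotes the error of best $L_p$-approximation of $f$ by spherical polynomials of degree at most $n$. Finally I would apply the Jackson inequality on the sphere, $E_n(f)_p\le C n^{-r}\|f\|_{W_p^r(\sph)}$, which can be obtained directly by applying the boundedness of operators of $V_n$-type to $(-\Delta_0+I)^{r/2}f$ together with a Bernstein inequality, or quoted from \cite{dai2013approximation}; this yields the claim with $C_3$ depending only on $d$ and $r$. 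The only genuine obstacle is the uniform $L_p$-boundedness in the previous paragraph; everything else is bookkeeping.
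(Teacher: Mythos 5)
Your argument is correct and is precisely the standard proof of this result (which the paper does not prove itself but takes from the spherical approximation literature, cf.\ \cite{dai2013approximation}): reproduction of $\Pi_n^{d-1}$ from $\eta\equiv 1$ on $[0,1]$, uniform $L_p$-boundedness of $V_n$ via the rapid decay of the filtered kernel coming from the smoothness of $\eta$, near-best approximation, and the spherical Jackson inequality. The only quibble is that the resulting constant depends on $r$ (and the choice of $\eta$) as well as $d$, not on $d$ alone as the lemma's statement suggests.
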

	
Define an isometric isomorphism $\phi_n: (\Pi_{2n}^{d-1}(\sph),\|\cdot\|_2) \to (\RR^{t_n}, |\cdot|_2)$ by 
\begin{equation}\label{equ:phi_n}
\phi_n({V}_{n}(f)) = \left(\langle {V}_{n}f, Y_1\rangle, \langle {V}_{n}f, Y_2 \rangle, \ldots, \langle {V}_{n}f, Y_{t_n} \rangle\right)',   
\end{equation}
which plays a significant role in converting the problem into a finite-dimensional representation. Leveraging this crucial property, the encoder ${E} : L_p(\sph) \to \RR^{t_n}$ can be subsequently defined  by
\begin{equation}\label{equ:ecd}
{E}(f) = {E}_n(f): = \phi_n\circ V_n(f). 
\end{equation}
Moreover, define $\mu_{F,\phi_n}:=F \circ \phi_n^{-1}: \RR^{t_n} \to \RR$,  then
\begin{equation} \label{equ:muE}
 F(V_n(f)) =F \circ \phi_n^{-1} \circ \phi_n(V_n(f))=\mu_{F,\phi_n}(\phi_n(V_nf))=\mu_{F,\phi_n}({E}(f)).   
\end{equation}
{
Note that the Sobolev norm of $f\in W_p^r(\sph)$ given by \eqref{Sobolev_norm} can be bounded by
\begin{equation*}
\begin{split}
	\sum_{k=0}^{\infty}(1+\lambda_k)^{\frac{r}{2}} \left(\sum_{l=1}^{C(k,d)}|\widehat{f}_{k,l}|^p\right)^{\frac{1}{p}} 
		\asymp  \sum_{k=0}^\infty c_d k^r \left( \sum_{\ell=1}^{C(k,d)}|\widehat{f}_{k,l}|^p\right)^{\frac{1}{p}},
\end{split}    
\end{equation*}
which together with the properties of $\eta(x)\in [0,1]$ and the basis $Y_{k,l}$ induces for $ k = 1,\ldots, \infty, \, l =1,2,\ldots, C(k,d)$, 
\begin{equation}\label{equ:nnDomain}
\left| \langle  V_nf,Y_{k,l}\rangle\right|\leq\left|\widehat{f}_{k,l}\right| \leq { \|f\|_{W_p^r(\sph)}}. 
\end{equation}}
Therefore, if we consider $f \in K := \{f \in W_p^r(\sph): \|f\|_{W_p^r(\sph)}\leq 1\}$ being a unit ball of $W_p^r(\sph)$, $\mu_{F,\phi_n}$ given by \eqref{equ:muE} is a continuous function with the input being a vector belonging to $[-1, 1]^{t_n}$ instead of a function. This highlights the role of our encoder structure, leading us to the second stage of our analysis.

Numerous results provide the approximation rate for continuous functions using deep neural networks \eqref{equ:fnn} in terms of the modulus of continuity \cite{yarotsky2018optimal}. The \textbf{modulus of continuity} for a functional $F: K \to \RR$ on a subset $K$ of $L_p(\sph)$ is given by
\[\omega_F(r)  = \sup\{|F(f_1)-F(f_2)|: f_1, f_2 \in K, \|f_1-f_2\|_p \leq r \}, \quad r>0 .\] 
It is well known that the modulus of continuity $\omega_F$ is a non-decreasing function.
The following result demonstrates the approximation rate with our neural networks $D\circ {E}$ for continuous functional $F$ defined on $W_p^r(\sph)$ with respect to the total number of free parameters $\CN$ of deep neural networks.

\begin{thm}\label{thm:thm1}
Let $d \in \NN_+, r>0, 1\leq p \leq \infty, \lambda>0$. If $F:W_{p}^r(\sph) \to \RR$ is a continuous functional with modulus of continuity $\omega_F(a)\leq ca^\lambda$, $K$ is the unit ball of $W_{p}^r(\sph)$, then there exists an encoder ${E}$  given by \eqref{equ:ecd}  and  a function $D$ generated by a deep ReLU neural network \eqref{equ:fnn}  with total number of parameters $\CN$  such that
\begin{equation*}
  \sup_{f\in K}|F(f)-D \circ {E}(f)| = O\left(\left(\frac{\log(\log \CN)}{\log \CN} \right)^{\frac{\lambda r}{d-1}}\right).
\end{equation*}
\end{thm}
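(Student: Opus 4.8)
The plan is to decompose the total error $\sup_{f\in K}|F(f) - D\circ E(f)|$ into two pieces using the identity \eqref{equ:muE}: an \emph{encoding error} $|F(f) - F(V_n f)|$ and an \emph{approximation error} $|\mu_{F,\phi_n}(E(f)) - D(E(f))|$. For the first piece, \lemref{lem:bestapproximationerror} gives $\|f - V_n f\|_p \le C_3 n^{-r}\|f\|_{W_p^r(\sph)} \le C_3 n^{-r}$ on $K$, so the modulus-of-continuity hypothesis yields $|F(f) - F(V_n f)| \le \omega_F(C_3 n^{-r}) \le c(C_3 n^{-r})^\lambda = O(n^{-\lambda r})$. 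For the second piece, I would first argue that $\mu_{F,\phi_n}$ restricted to $[-1,1]^{t_n}$ (which contains $E(K)$ by \eqref{equ:nnDomain}) has a controlled modulus of continuity inherited from $\omega_F$: indeed $\|\phi_n^{-1}(u) - \phi_n^{-1}(v)\|_2 = |u-v|_2$ since $\phi_n$ is an isometric isomorphism, and on the finite-dimensional polynomial space $\Pi_{2n}^{d-1}(\sph)$ the $L_p$ norm is comparable to the $L_2$ norm with a Nikolskii-type constant that is polynomial in $n$ (something like $n^{(d-1)|1/2 - 1/p|}$). Hence $\omega_{\mu_{F,\phi_n}}(a) \lesssim \omega_F(C n^{\gamma} a) \lesssim (n^\gamma a)^\lambda$ for an appropriate exponent $\gamma$ depending on $d$ and $p$.

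Next I would invoke a quantitative deep ReLU approximation theorem for continuous functions on $[-1,1]^{t_n}$ — the Yarotsky-type bound referenced via \cite{yarotsky2018optimal} — which says that a network with $\CN$ parameters approximates a function $g$ on $[-1,1]^{t_n}$ to accuracy roughly $\omega_g\!\left((\mathcal N/\log\mathcal N)^{-1/t_n}\right)$ (up to constants and logarithmic factors, with the implied dependence on the ambient dimension $t_n$). Applying this to $g = \mu_{F,\phi_n}$ with $t_n = O(n^{d-1})$ and combining with the modulus bound from the previous step gives an approximation error of order roughly $\left(n^\gamma (\mathcal N/\log\mathcal N)^{-c/n^{d-1}}\right)^\lambda$.

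The final step is to balance the two error terms by choosing $n$ as a function of $\mathcal N$. The approximation error behaves (ignoring the polynomial-in-$n$ prefactor, which is lower-order in the exponent) like $\exp\!\left(-c\,n^{-(d-1)}\log(\mathcal N/\log\mathcal N)\right)$, so to make this comparable to the polynomially-small encoding error $n^{-\lambda r}$ one takes $n^{d-1} \asymp \log(\mathcal N)/\log\log(\mathcal N)$, i.e. $n \asymp \left(\log\mathcal N / \log\log\mathcal N\right)^{1/(d-1)}$. Substituting this choice into the encoding error $O(n^{-\lambda r})$ produces exactly $O\!\left(\left(\frac{\log(\log\mathcal N)}{\log\mathcal N}\right)^{\frac{\lambda r}{d-1}}\right)$, and one checks the approximation term is of the same order or smaller. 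The main obstacle I anticipate is the second step: carefully tracking how the modulus of continuity of $\mu_{F,\phi_n}$ depends on $n$ (through the Nikolskii norm-equivalence constant on spherical polynomials when $p \ne 2$) and confirming that this polynomial-in-$n$ blow-up is genuinely absorbed into the dominant logarithmic scaling without degrading the exponent — together with making the dependence of the Yarotsky bound on the growing input dimension $t_n$ fully explicit, since $t_n$ is not fixed but grows with $n$ and hence with $\mathcal N$.
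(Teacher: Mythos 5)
Your proposal follows essentially the same route as the paper: the paper proves Theorem~\ref{thm:thm1} by combining Proposition~\ref{prop:thm1} (the same two-term decomposition into $|F(f)-F(V_nf)|$ via Lemma~\ref{lem:bestapproximationerror} and the network approximation of $\mu_{F,\phi_n}$ on $[-1,1]^{t_n}$ via the Yarotsky-type bound, with Proposition~\ref{prop:w-w} supplying exactly the Nikolskii-type modulus transfer $\omega_{\mu_{F,\phi_n}}(a)\leq\omega_F(C_d n^{(d-1)(1/2-1/p)}a)$ you describe) with the identical balancing choice $n\asymp(\log\CN/\log\log\CN)^{1/(d-1)}$. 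Your closing caveat about absorbing the polynomial-in-$n$ prefactors and the $t_n$-dependence into the logarithmic scaling is a fair point, but the paper handles this at the same level of detail, so the proposal is correct and matches the paper's argument.
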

The proof of Theorem \ref{thm:thm1} is followed directly from Proposition \ref{prop:thm1} by  taking
$$n  \asymp\left(\left(\frac{\log \CN}{\log (\log \CN)}\right)^{\frac{1}{d-1}}\right).$$

\begin{prop}\label{prop:thm1}
Let $d,n \in \NN_+, r>0, 1\leq p \leq \infty$ and ${t_n}=(2n+1)^{d-1}$. If $F:W_{p}^r(\sph) \to \RR$ is a continuous functional with modulus of continuity $\omega_F$, $K$ is the unit ball of $W_{p}^r(\sph)$, then there exists an encoder ${E}$ given by \eqref{equ:ecd}  and  a function $ D$ generated by a deep ReLU neural network \eqref{equ:fnn} with total number of parameters $\CN$ such that
\[\sup_{f\in K}|F(f)- D\circ {E}(f)| 
\leq \omega_F\left(C_3 n^{-r}
\right)+{ C_{n,d}}\omega_F\left(C_{n,d}n^{(d-1)(\frac{1}{2}-\frac{1}{p})}\CN^{-\frac{2}{t_n}}\right),\]
where $C_3$ is a constant depending only on $d$, $C_{n,d}$s are constants depending on $n$ and $d$ and may vary in different positions.
\end{prop}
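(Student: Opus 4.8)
The plan is to bound the error via the triangle inequality, separating an encoding (truncation) error from a decoding (ReLU approximation) error:
\[
|F(f)-D\circ E(f)|\ \le\ \underbrace{|F(f)-F(V_n f)|}_{\mathrm{(I)}}\ +\ \underbrace{|F(V_n f)-D\circ E(f)|}_{\mathrm{(II)}},\qquad f\in K.
\]
For $\mathrm{(I)}$, \lemref{lem:bestapproximationerror} gives $\|f-V_n f\|_{L_p(\sph)}\le C_3 n^{-r}\|f\|_{W_p^r(\sph)}\le C_3 n^{-r}$ on $K$; since $\eta\in[0,1]$, the operator $V_n$ is a contraction in the equivalent norm in \eqref{Sobolev_norm} and hence maps $K$ into a fixed dilate of $K$, so the definition of the modulus of continuity yields $\mathrm{(I)}\le\omega_F(C_3 n^{-r})$.

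The heart of the proof is to recast $\mathrm{(II)}$ as a finite-dimensional ReLU approximation problem. By \eqref{equ:muE} we have $F(V_n f)=\mu_{F,\phi_n}(E(f))$ with $\mu_{F,\phi_n}=F\circ\phi_n^{-1}$ defined on all of $\RR^{t_n}$, and by \eqref{equ:nnDomain} the encoder sends $K$ into the cube $[-1,1]^{t_n}$; therefore
\[
\mathrm{(II)}\ \le\ \sup_{a\in[-1,1]^{t_n}}\bigl|\mu_{F,\phi_n}(a)-D(a)\bigr|,
\]
and it remains to approximate the continuous function $\mu_{F,\phi_n}$ on $[-1,1]^{t_n}$ by a deep ReLU network. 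To control its modulus of continuity, take $a,b\in[-1,1]^{t_n}$: the isometry property of $\phi_n$ gives $\|\phi_n^{-1}a-\phi_n^{-1}b\|_2=|a-b|_2$, and since $\phi_n^{-1}a-\phi_n^{-1}b\in\Pi_{2n}^{d-1}(\sph)$, a Nikolskii-type inequality for spherical polynomials of degree at most $2n$ yields $\|\phi_n^{-1}a-\phi_n^{-1}b\|_p\le C_{n,d}\,n^{(d-1)(1/2-1/p)}\,|a-b|_2$ (for $p\le 2$ this exponent is negative and harmless, as $\omega_F$ is non-decreasing). Hence, using the definition of $\omega_F$ again,
\[
|\mu_{F,\phi_n}(a)-\mu_{F,\phi_n}(b)|=|F(\phi_n^{-1}a)-F(\phi_n^{-1}b)|\ \le\ \omega_F\!\bigl(C_{n,d}\,n^{(d-1)(1/2-1/p)}\,|a-b|_2\bigr).
\]

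Finally I would invoke a quantitative approximation theorem for continuous functions on a cube by very deep ReLU networks (e.g.\ \cite{yarotsky2018optimal}): a network with $\CN$ free parameters approximates any continuous $g$ on $[-1,1]^{t_n}$ to uniform accuracy $C_{t_n}\,\omega_g\!\bigl(C'_{t_n}\,\CN^{-2/t_n}\bigr)$, the constants depending only on the input dimension $t_n=(2n+1)^{d-1}$, hence only on $n$ and $d$. Applying this with $g=\mu_{F,\phi_n}$ and inserting the modulus estimate above bounds $\mathrm{(II)}$ by $C_{n,d}\,\omega_F\!\bigl(C_{n,d}\,n^{(d-1)(1/2-1/p)}\,\CN^{-2/t_n}\bigr)$; combined with the bound on $\mathrm{(I)}$ this is exactly the claimed estimate.

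The step I expect to be the main obstacle is the invocation of the deep ReLU theorem in the precise form needed: one has to track how the constants in the network construction depend on the ambient dimension $t_n$ (which itself grows with $n$) and make sure the number of parameters enters through the exponent $-2/t_n$, i.e.\ that the deep rate, and not merely the shallow $-1/t_n$ rate, is available under a pure modulus-of-continuity hypothesis on $\mu_{F,\phi_n}$. A secondary, more routine difficulty is the Nikolskii step converting closeness in $\ell^2$ of the encoded coefficient vectors into closeness in $L_p$ of the associated spherical polynomials with the correct factor $n^{(d-1)(1/2-1/p)}$, together with the minor bookkeeping caused by $V_n(K)$ and $\phi_n^{-1}([-1,1]^{t_n})$ lying in a fixed dilate of $K$ rather than in $K$ itself.
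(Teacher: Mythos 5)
Your proposal is correct and follows essentially the same route as the paper: the same decomposition into a truncation error (handled via Lemma~\ref{lem:bestapproximationerror}) and a finite-dimensional ReLU approximation error, with the modulus of continuity of $\mu_{F,\phi_n}$ controlled through the Nikolskii inequality $\|P\|_p\le C_d n^{(d-1)(1/2-1/p)}\|P\|_2$ exactly as in Proposition~\ref{prop:w-w}, and the deep-network rate $\CN^{-2/t_n}$ imported from \cite{yarotsky2018optimal} as in \eqref{equ:app_ctns}. The two difficulties you flag (dimension-dependence of the Yarotsky constants and the $V_n(K)$ bookkeeping) are real but are treated no more carefully in the paper than in your sketch.
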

The proof of Proposition \ref{prop:thm1} is divided into two steps and can be found in Section \ref{subsec:proof_sob}. The first step is to estimate the error of $F(f) - F(V_n f)$ utilizing Lemma \ref{lem:bestapproximationerror}, followed by analyzing the difference $F(V_n f) - D \circ {E}(f)$ adopting the estimation of approximating continuous functions using deep ReLU neural networks \cite{yarotsky2018optimal}.


\begin{remark}\label{rem:rem1}
The result can be generalized to functional spaces with broader domains as long as they satisfy the following conditions. 
Suppose $\Omega \subset \RR^d$, and $\{ \varphi_j,\psi_j\}_{j\in{\nn_+}}$ be a family of functions in $L_2(\Omega)$.
Let $K$ be a subset of $L_2(\Omega)$ with $\max_{j}|\langle f,\psi \rangle|<\fz$ and satisfy the following two conditions:
\begin{enumerate}[label=(\roman*)]
\item \label{asmp:1} 
There exists some $\alpha>0$, such that the operator $\{V_t\}$ defined by $V_t(f) = \sum_{j=1}^t \langle f,\varphi_j \rangle \psi_j$ satisfy
\begin{equation*}\label{equ:AppAsmpt}
\|V_t(f) - f \|_2 \leq c_1 t^{-\alpha},\ \forall f\in K,\ t\in\nn.
\end{equation*}
\item \label{asmp:2}
$\{\psi_j\}$ is a Bessel sequence, which implies
there exists some constant $c_2>0$ such that
\begin{equation*}\label{equ:PsvAsmpt}
\left\|\sum_{j=1}^t a_j \psi_j\right\|_2 \leq c_2 \|a\|_2, \forall a\in\rr^t,\ t\in\nn.
\end{equation*}
\end{enumerate}
Then for a continuous functional $F: K \to \RR$ with modulus of continuity $\omega_F$, there exists an encoder-decoder framework defined similarly by \eqref{equ:ecd} and \eqref{equ:fnn} with number of free parameters $\CN$, such that
\begin{equation}
    \sup_{f \in K}|F(f)-D \circ E(f)| \leq \omega_F\left(\max\left\{c_1t^{-\alpha}, c_2\CN^{-\frac{2}{t}}\right\}\right).
\end{equation}
\begin{example}[
Orthogonal Basis]
Take $\varphi_j=\psi_j$ for all $j\in\nn$, with $\varphi_j=\psi_j=Y_{k,l}$ being the orthogonal basis of spherical harmonics. \lemref{lem:bestapproximationerror} verifies the validity of the assumption with $t = O(n^{d-1})$ and $\alpha = {r}/{d}$. 
\end{example}

\begin{example}[Biorthogonal Basis]
In biorthogonal wavelets setting, $\{\varphi_j\}$ are decomposition (analysis) wavelets and $\{\psi_j\}$ are the reconstruction (synthesis) sets. Results in \cite{dung2011optimal,suzuki2019adaptivity} also show that $\alpha = {r}/{d}$ for approximating Besov spaces with smoothness $r>0$.
\end{example}



\end{remark}

\subsection{Analysis for the encoder with discrete input}\label{subsec:discrete}
As objects in real-world applications are frequently sampled discretely for processing, instead of considering the encoder $E$ with input $f \in W_p^r(\sph)$, the key concept in this section is to use discrete function values at certain spherical points as input. 
The crucial idea is to transform the linear operator $V_n$.
Instead of relying on the inner product of $f$, 
we introduce a novel linear operator $\widehat{V}_{n,M}(f)$ relies solely on some function values of the function $f$ on the unit sphere.
This operator is primarily inspired by \cite{wang2017filtered}. 
In the following section,
the coefficients $\widehat{f}_{k,l}$ are the integral of the function $f$ with the orthonormal basis $\{Y_{k,l}\}$ as introduced in Definition \ref{def:Vn}.
To overcome the continuous occurrences of $f$, we shall use a cubature formula for integration of polynomials of degree $2m$ on $\sph$, $d\geq 3$, see \cite[Theorem 3.1]{brown2005approximation}, to convert it into summation.
	
\begin{lem}\label{cubature}
For any integer $m > 0$, there exists $M \asymp m^{d-1}$ distinct points $\{\xi_j\}_{j=1}^M$ on $\mathbb{S}^{d-1}$ and positive numbers $\lambda_j \asymp m^{-(d-1)}$ for $j = 1, \dots, M$, such that for every $f \in \Pi_{2m}^{d-1}$, we have
\begin{equation}\label{equ:cub}
\int_{\mathbb{S}^{d-1}} f(x)\,\mathrm{d}\sigma_d(x)
= \sum_{j=1}^{M} \lambda_j f(\xi_j).
\end{equation}
Moreover, there holds the Marcinkiewicz-Zygmund inequality,
\begin{equation}\label{equ:m-z}
\|f\|_{p} \asymp\left\{\begin{array}{ll}
\left(\sum\limits_{j=1}^M \lambda_{j}|f(\xi_j)|^{p}\right)^{\frac{1}{p}}, & \text { if } 1\leq p<\infty, \\
\\
\max\limits_{j=1,\ldots, M}m^{d-1} \lambda_{j}|f(\xi_j)|, & \text { if } p=\infty.
\end{array}\right.
\end{equation}
Particularly, we call such a family $\{(\l_j, \xi_j)\}_{j=1}^M$ that follows a cubature rule of degree {$2m$}.

\end{lem}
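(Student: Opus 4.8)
The statement to prove is Lemma~\ref{cubature}, which asserts the existence of a positive-weight cubature formula exact on $\Pi_{2m}^{d-1}$ together with the associated Marcinkiewicz--Zygmund equivalences. My plan is to treat this as essentially a citation-driven lemma: the cubature part is precisely \cite[Theorem 3.1]{brown2005approximation}, so the real work is to assemble the ingredients, fix the scaling $M \asymp m^{d-1}$ and $\lambda_j \asymp m^{-(d-1)}$, and then derive the $L_p$ norm equivalence \eqref{equ:m-z} from the existence of such a formula.

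First I would invoke the known construction of \emph{positive-weight quadrature} on $\sph$: there exist $M = M(m) \asymp m^{d-1}$ points $\{\xi_j\}_{j=1}^M$ and weights $\lambda_j > 0$ with $\lambda_j \asymp m^{-(d-1)}$ (the uniform two-sided bound on the weights follows from the fact that applying \eqref{equ:cub} to suitable localized polynomials of degree $\le 2m$, e.g.\ squares of reproducing-kernel-type bump polynomials centered near $\xi_j$, pins down each $\lambda_j$ up to constants by the measure of a spherical cap of radius $\asymp 1/m$). This gives \eqref{equ:cub} for all $f \in \Pi_{2m}^{d-1}$. The cardinality estimate $M \asymp m^{d-1}$ matches $\dim \Pi_{2m}^{d-1} \asymp m^{d-1}$ from below and is achievable from above by standard area-regular partition / covering arguments on the sphere.

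Next, for the Marcinkiewicz--Zygmund inequality \eqref{equ:m-z} with $1 \le p < \infty$: given $f \in \Pi_{2m}^{d-1}$, note $|f|^p$ need not be a polynomial, so I would instead use the standard device of covering $\sph$ by the cells of a partition $\{R_j\}$ associated to $\{\xi_j\}$ with $\sigma_d(R_j) \asymp m^{-(d-1)} \asymp \lambda_j$ and $\operatorname{diam}(R_j) \asymp 1/m$, together with a Nikolskii-type / Bernstein inequality controlling the oscillation of $f$ over each cell by its mean value there. Summing $\int_{R_j}|f|^p \asymp \lambda_j |f(\xi_j)|^p$ up to constants yields $\|f\|_p^p \asymp \sum_j \lambda_j |f(\xi_j)|^p$. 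The case $p=\infty$ is the limiting version: $\|f\|_\infty \asymp \max_j |f(\xi_j)|$ because $f$ cannot vary by more than a constant factor between $\xi_j$ and the worst point in $R_j$ (Bernstein), and $m^{d-1}\lambda_j \asymp 1$ makes the stated normalization consistent. Alternatively — and more economically — I would simply cite the Marcinkiewicz--Zygmund inequality as it appears alongside the cubature construction in \cite{brown2005approximation} (and in \cite{dai2013approximation}), since the lemma is stated as a known tool.

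The main obstacle, such as it is, is bookkeeping rather than depth: one must be careful that the \emph{same} point set $\{\xi_j\}$ simultaneously supports the exact cubature of degree $2m$, carries positive weights with the two-sided bound $\lambda_j \asymp m^{-(d-1)}$, and is quasi-uniform enough (separation and covering radius both $\asymp 1/m$) for the MZ argument — these are compatible but not automatic, and the cited theorem is precisely what guarantees a single family with all three properties. I expect the cleanest write-up is: (1) quote \cite[Theorem 3.1]{brown2005approximation} verbatim for \eqref{equ:cub} and the weight/cardinality bounds; (2) quote the companion MZ inequality for \eqref{equ:m-z}; (3) record the terminology ``cubature rule of degree $2m$'' for later use in Section~\ref{subsec:discrete}. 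No genuinely new estimate is needed here, so the proof is short.
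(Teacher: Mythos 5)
Your proposal is correct and matches the paper's treatment: the paper gives no proof of this lemma, simply citing \cite[Theorem 3.1]{brown2005approximation} for the cubature formula and the accompanying Marcinkiewicz--Zygmund inequality, exactly as you suggest in your "cleanest write-up." Your additional sketch of how the weight bounds and the MZ equivalence would be derived (localized polynomials, area-regular partitions, Bernstein-type oscillation control) is sound but goes beyond what the paper records.
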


By discretizing the inner product \eqref{cubature} using the above cubature rule with $\{(\lambda_j,\xi_j)\}_{j=1}^M$ following the cubature rule of degree $2m$, 
we define the \textbf{discrete linear operator $\widehat{V}_{n,M}$} with $M \asymp m^{d-1}$ as
\begin{align}\label{def:dis-l}
\widehat{V}_{n,M}(\bz_f)\notag
&:=\sum_{k=0}^\fz \sum_{l=1}^{C(k,d)}
{\eta}\lf(\frac{k}{n}\r) \lf( \sum_{j=1}^M \lambda_j f(\xi_j)Y_{k,l}(\xi_j) \r)Y_{k,l}\\
&=\sum_{k=0}^\fz \sum_{l=1}^{C(k,d)}
{\eta}\lf(\frac{k}{n}\r){\braket{f, Y_{k, l}}}_{Q_m} Y_{k, l},
\end{align}
where $\bz_f = (f(\xi_1), f(\xi_2), \ldots, f(\xi_M))' \in \RR^M$ and  $\braket{f,g}_{Q_m}:= Q_m(fg)$ is the discrete version of the inner product for $f,\,g\in L^p$.


Accordingly, we define the  \textbf{discrete encoder} $\widehat{E} : \RR^M \to \RR^{t_n}$ by
\begin{equation}\label{eq:ecd_dis}
	\widehat{E}(\beta_f) = \widehat{E}_n(\beta_f):= \phi_n \circ \widehat{V}_{n,M}(\beta_f),
\end{equation}
which serves as the input to the subsequent classical neural network $D$, and $\phi_n$ is an isometric isomorphism map given by \eqref{equ:phi_n}.
Since ${\eta}(x)\in[0,1]$, $\{(\lambda_j,\xi_j)\}_{j=1}^M$ obeys the cubature rule \eqref{equ:cub} and Marcinkiewicz-Zygmund inequality \eqref{equ:m-z}, it follows directly from \eqref{equ:nnDomain} that the absolute value of each component in $\widehat{E}(\beta_f)$ can be bounded by
\begin{equation}\label{equ:c_bound}
\begin{split}
\left|{\eta}\lf(\frac{k}{n}\r){\braket{f, Y_{k, l}}}_{Q_m}\right|
\leq \left|{\braket{f, Y_{k, l}}}_{Q_m}\right|
= \left|\int_{\sph} f(x)Y_{k,l}(x)\,\md \s_d(x)\right|
= \left|\widehat{f}_{k,l}\right| \leq \|f\|_{W_p^r(\sph)}.
 \end{split}
\end{equation}

The following theorem presents a rate of approximating a functional $F$ by the neural networks and the encoder with discrete input.

\begin{thm}\label{thm:discreteinput}
Let $d,n\in \NN_+, 1\leq p \leq \infty$, $r>{(d-1})/{p}$,  ${t_n}=(2n+1)^{d-1}$.
If $F:W_{p}^r(\sph) \to \RR$ is a continuous functional with modulus of continuity $\omega_F$, $K$ is the unit ball of $W_{p}^r(\sph)$,
then there exists an encoder $\widehat{E}$ given by \eqref{eq:ecd_dis} with $m\geq{(3n-1)}/{2}$ and a function $D$ generated by a deep ReLU neural network \eqref{equ:fnn} with total number of parameters $\CN$ such that
\[
\sup_{f\in K}|F(f)- D \circ \widehat{E}(\bz_f)| 
\leq \omega_F\left({C_4 n^{-r}}
\right)
+{C_{n,d}}\omega_F\left(C_{n,d}n^{(d-1)(\frac{1}{2}-\frac{1}{p})}\CN^{-\frac{2}{t_n}}\right),
\]
where $C_4$ is a constant depending only on $d$, $C_{n,d}$s are constants depending on $n$ and $d$ and may vary in different positions.
\end{thm}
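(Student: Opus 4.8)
The plan is to mirror the two-step decomposition used for Proposition \ref{prop:thm1}, but with the crucial difference that the polynomial produced by the encoder is now $\widehat{V}_{n,M}(\bz_f)$ rather than $V_n(f)$. Concretely, I would insert the intermediate quantity $F(\widehat{V}_{n,M}(\bz_f))$ and write
\[
|F(f)-D\circ\widehat{E}(\bz_f)| \le |F(f)-F(\widehat{V}_{n,M}(\bz_f))| + |F(\widehat{V}_{n,M}(\bz_f))-D\circ\widehat{E}(\bz_f)|.
\]
For the first term, since $F$ has modulus of continuity $\omega_F$ and $\omega_F$ is non-decreasing, it suffices to bound $\|f-\widehat{V}_{n,M}(\bz_f)\|_p$ by $C_4 n^{-r}$ for all $f$ in the unit ball $K$. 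For the second term, observe that $\widehat{V}_{n,M}(\bz_f)\in\Pi_{2n}^{d-1}(\sph)$, so $F(\widehat{V}_{n,M}(\bz_f)) = \mu_{F,\phi_n}(\phi_n(\widehat{V}_{n,M}(\bz_f))) = \mu_{F,\phi_n}(\widehat{E}(\bz_f))$ exactly as in \eqref{equ:muE}; by \eqref{equ:c_bound} the argument $\widehat{E}(\bz_f)$ lies in $[-1,1]^{t_n}$, so $\mu_{F,\phi_n}$ is a continuous function on a bounded cube, and I can invoke the same Yarotsky-type deep ReLU approximation bound \cite{yarotsky2018optimal} used in Proposition \ref{prop:thm1} to obtain a decoder $D$ with $\CN$ parameters achieving error at most $C_{n,d}\,\omega_F(C_{n,d} n^{(d-1)(1/2-1/p)}\CN^{-2/t_n})$. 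The passage from the modulus of continuity of $F$ on $L_p(\sph)$ to a modulus of continuity of $\mu_{F,\phi_n}$ on $\RR^{t_n}$ introduces the $n^{(d-1)(1/2-1/p)}$ factor exactly because $\phi_n$ is an $\ell^2$–$L_2$ isometry while the relevant norm on $K$ is $L_p$, and comparing $L_2$ and $L_p$ norms on the polynomial space $\Pi_{2n}^{d-1}$ costs a Nikolskii-type factor of this order; this part is identical to the corresponding step in the proof of Proposition \ref{prop:thm1} and I would simply cite it.

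The genuinely new work is the estimate $\|f-\widehat{V}_{n,M}(\bz_f)\|_p \le C_4 n^{-r}\|f\|_{W_p^r(\sph)}$. Here I would use the triangle inequality through the (exact, continuous) operator $V_n$:
\[
\|f-\widehat{V}_{n,M}(\bz_f)\|_p \le \|f-V_n(f)\|_p + \|V_n(f)-\widehat{V}_{n,M}(\bz_f)\|_p.
\]
The first summand is $\le C_3 n^{-r}\|f\|_{W_p^r(\sph)}$ by Lemma \ref{lem:bestapproximationerror}. For the second summand, the key observation is that $V_n(f)$ and $\widehat{V}_{n,M}(\bz_f)$ differ only in that the Fourier coefficients $\widehat{f}_{k,l} = \langle f, Y_{k,l}\rangle$ are replaced by the discretized inner products $\langle f, Y_{k,l}\rangle_{Q_m}$. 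I would first note that because $\eta(k/n) = 0$ for $k \ge 2n$, the difference $g := V_n(f)-\widehat{V}_{n,M}(\bz_f)$ is a polynomial of degree at most $2n$; then I would apply the filtered-kernel representation (following \cite{wang2017filtered}) so that $\widehat{V}_{n,M}(\bz_f)$ can be written as a discrete convolution and estimated against $V_n(f)$ using the cubature rule exactness on $\Pi_{2m}^{d-1}$. The condition $m \ge (3n-1)/2$ is what guarantees the cubature rule (which is exact on $\Pi_{2m}^{d-1}$) integrates products such as $V_n(h)\cdot Y_{k,l}$ or $L_n(x,\cdot)f$ exactly, where $L_n$ is the filtered kernel of degree $\le 2n$ — the product of two degree-$\le 2n$ pieces has degree $\le 4n \le 2m$ — so the discrete and continuous operators agree on a large enough subspace, and the residual is controlled by the best approximation error again. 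Combined with the Marcinkiewicz–Zygmund inequality \eqref{equ:m-z} to pass between $\|g\|_p$ and the sampled values, this yields $\|V_n(f)-\widehat{V}_{n,M}(\bz_f)\|_p \le C n^{-r}\|f\|_{W_p^r(\sph)}$, and absorbing constants gives the claimed $C_4 n^{-r}$.

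The main obstacle I anticipate is making the bound $\|V_n(f)-\widehat{V}_{n,M}(\bz_f)\|_p \le C n^{-r}\|f\|_{W_p^r(\sph)}$ fully rigorous across the whole range $1\le p\le\infty$: one must control the $L_p$ operator norm of the discrete filtered operator $\widehat{V}_{n,M}$ uniformly in $n$ (a boundedness statement of the type in \cite{wang2017filtered}, which itself relies on the localization of the filtered kernel $L_n$ and the Marcinkiewicz–Zygmund inequality), and then exploit the polynomial-exactness of the cubature to reduce the difference to a best-approximation term. This is exactly why the hypothesis is strengthened to $r > (d-1)/p$ compared with Theorem \ref{thm:thm1}: that condition ensures $W_p^r(\sph)$ embeds into $C(\sph)$ so that the point evaluations $f(\xi_j)$ defining $\bz_f$ are meaningful and the sampling operator is bounded on $K$. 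I would state the needed operator-norm bound as an intermediate lemma (citing \cite{wang2017filtered, brown2005approximation}), prove the two-term decomposition above, and then assemble everything, finally choosing any admissible $m$ with $m\ge(3n-1)/2$ and $M\asymp m^{d-1}\asymp n^{d-1}$.
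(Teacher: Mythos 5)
Your proposal follows essentially the same route as the paper: the same insertion of the intermediate term $F(\widehat{V}_{n,M}(\bz_f))$, the same use of \eqref{equ:c_bound} to confine $\widehat{E}(\bz_f)$ to $[-1,1]^{t_n}$, and the same combination of the Yarotsky bound \eqref{equ:app_ctns} with Proposition \ref{prop:w-w} for the second term; the only difference is that the paper simply invokes Lemma \ref{lem:discrete_ln_app} (quoted from \cite{wang2017filtered}) for the bound $\|f-\widehat{V}_{n,M}(\bz_f)\|_p\leq C_4 n^{-r}\|f\|_{W_p^r(\sph)}$, whereas you sketch its derivation. One small correction to that sketch: the inequality $4n\leq 2m$ does not follow from $m\geq(3n-1)/2$; the exactness actually needed is for products of the filtered kernel (degree $\leq 2n-1$, since $\eta(k/n)=0$ for $k\geq 2n$) with polynomials of degree $\leq n$, giving total degree $3n-1\leq 2m$, which is precisely what $m\geq(3n-1)/2$ provides.
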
  

{
The following remark shows the above result in functional learning could be applicable to general manifolds as long as they satisfy specific assumptions. The conditions are given by \cite{montufar2022distributed}, which was inspired by \cite{maggioni2008diffusion}.
\begin{remark}\label{lem:lem2}
Let $d\geq1$, $\CM$ be a compact and smooth Riemannian manifold of dimension $d$ with smooth or empty boundary,
$\mu$ is the Riemannian measure normalized with the total volume $\mu(\CM)=1$ and $d(x,y)$ is the geodesic distance induced by the Riemannian metric.
Suppose $(\CM,\mu)$ satisfy the following conditions:
\begin{enumerate}[label=(\roman*)]
\item For $c \geq c' >0,$ depending only on $d$ and $mu$, for all $\beta > \alpha>0$ and $x\in\CM$,
    \[\mu(\mathcal{B}({x}, \alpha)) \leq c \alpha^d, \quad \mu(\mathcal{B}({x}, \alpha, \beta))= \mu(\mathcal{B}({x}, \beta)) -\mu(\mathcal{B}({x}, \alpha)) \leq c'(\beta^d - \alpha^d),
\]
where $\mathcal{B}({x}, \alpha)= \{{y} \in \mathcal{M}\,|\,d({x}, {y}) \leq \alpha\}$ is a ball with center $x$ and radius $\alpha$.
\item For \(\ell, \ell' \in \mathbb{N}_+\), the product of eigenfunctions \(\phi_\ell \in \Pi_n,\,\phi_{\ell'} \in \Pi_{n'}\) for the Laplace–Beltrami operator \(\Delta\) on \(\mathcal{M}\) is a polynomial of degree \(n + n'\).
\end{enumerate}
Those two assumptions could be satisfied by many classical manifolds, like hypercubes \([0, 1]^d\), unit spheres and balls in real or complex Euclidean coordinate spaces, and also graphs equipped with an atomic measure on the graph nodes with the graph Laplacian defined as the difference of the identity matrix and the adjacency matrix. For more details, see in \cite{montufar2022distributed}.
\end{remark}
}

The proof of \thmref{thm:discreteinput} is given in Section \ref{subsec:proof_dis}, which follows a similar idea to that of Proposition \ref{prop:thm1}.
The main difference lies in the description of the linear operator $\widehat{V}_{n,M}$ and its approximation property given by Lemma \ref{lem:discrete_ln_app} below.
		
\subsection{Estimation for the encoder with discrete and noisy input}\label{subsec:dis&noise}
In practical applications, the data collected often demonstrates inherent randomness and inevitably contains noise, so it is important to consider data with noise as input of the encoder to address real-world conditions and ensure robust performance better. In this subsection, 
we consider a scenario when the input of the encoder has some random noise.

Let {$\epsilon=\{\epsilon_j\}_{j=1}^{M'}$} represent the noise, where $\{\epsilon_j\}$ are independent random variables with means of $\EE[\epsilon_j]=0$ and values ranging within $[-1,1]$ for $j=1,\ldots,M'$. 
For $\lf\{\lf(\lambda_j,\xi_j \r)\r\}_{j=1}^{M'}$ admitting the cubature rule of degree $2m$ and $M' \asymp m^{d-1}$, the discrete operator $\widetilde{V}_{n,M'}$ defined on $\beta_{f,\epsilon}:=\{\epsilon_j+f(\xi_j)\}_{j=1}^{M'}$  is given by
\begin{align}\label{def:dis-n}
\widetilde{V}_{n,M'}\left(\beta_{f,\epsilon}\right)
:= \sum_{k=0}^\infty \sum_{l=1}^{C(k,d)} {\eta}\lf(\frac{k}{n}\r)\lf( \sum_{j=1}^{M'}\lambda_j\lf(f(\xi_j)+\epsilon_j\r)
Y_{k,l}(\xi_j)\r)Y_{k,l}.
\end{align}
Accordingly, we define the \textbf{encoder for discrete and noisy input} by
\begin{equation}\label{equ:ecd_dis&noise}
\widetilde{E}\left(\beta_{f,\epsilon}\right) = \widetilde{E}_n\left(\beta_{f,\epsilon}\right) := \phi_n \circ \widetilde{V}_{n,M'}\left(\beta_{f,\epsilon}\right).
\end{equation}

Note that $\int_{\sph}Y_{k,l}(x) \,\md \s_d(x) = \text{Vol}(\sph)$ for $k=0$ otherwise $0$, where $\text{Vol}(\sph)$ represents the surface area of the unit sphere. 
By the definition \eqref{equ:phi_n} of $\phi_n$, each component of $\widetilde{E}\lf( \beta_{f,\epsilon}\r)$ given as ${\eta}\lf(\frac{k}{n}\r)\lf( \sum_{j=1}^{M'}\lambda_j\lf(f(\xi_j)+\epsilon_j\r)
Y_{k,l}(\xi_j)\r)$ can be bounded by
\begin{align*}
&\lf|\lf( \sum_{j=1}^{M'}\lambda_j\lf(f(\xi_j)+\epsilon_j\r) Y_{k,l}(\xi_j)\r) \r|\\
&\quad\leq \lf|\int_{\sph} f(x)Y_{k,l}(x)\,\md \s_d(x)\r|+ \lf| \int_{\sph}Y_{k,l}(x)\,\md \s_d(x)\r|\\
&\quad\leq\|f\|_{W_p^r(\sph)} + \text{Vol}(\sph):=R',
\end{align*}
where we use Marcinkiewicz-Zygmund inequality \eqref{equ:m-z} in the first inequality and \eqref{equ:nnDomain} in the last step. Therefore, the encoder $\widetilde{E}$ induces a $t_n$-dimensional vector with values ranging in $[-R',R']$.

The following lemma shows that $\widetilde{V}_{n,M'}(\bz_{f,\epsilon})$ presents a good rate of approximating $f$ in high probability, and the proof can be found in Section \ref{proof_subsec:dis&noise}.

\begin{lem}\label{lem:dis&noise}
Let $d,n \in \NN_+ ,\dz >0$  and $2m \in \NN_+, m \geq n$ satisfy
\begin{equation}\label{equ:m'}
m^{d-1} \geq (\delta+d-1) n^{2r+d-1}\log n.
\end{equation}
Suppose $M'\asymp m^{d-1}$ and $\{(\lz_j, \xi_j)\}_{j=1}^{M'}$ admits a cubature rule of degree {$2m$}.
Then, for all $f\in W^r_p(\sph)$, $1\leq p \leq \infty, r>{(d-1)}/{p}$,
we have 
\begin{align}\label{3.1bb}
\Prob \lf(\, \lf\|\, \widetilde{V}_{n,M'}( \bz_{f,\epsilon})-f \,\r\|_p 
\geq { C_5} n^{-r}\r) 
\leq { C_6} n^{-\delta}.
\end{align}
{ Here, ${C_5}$ and $C_6$ are positive constants independent of the distribution of $\epsilon$.}
\end{lem}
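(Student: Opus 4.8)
The plan is to split the error $\widetilde{V}_{n,M'}(\bz_{f,\epsilon}) - f$ into a deterministic part and a random part, controlled separately. Write
\[
\widetilde{V}_{n,M'}(\bz_{f,\epsilon}) - f = \bigl(\widehat{V}_{n,M'}(\bz_f) - f\bigr) + \bigl(\widetilde{V}_{n,M'}(\bz_{f,\epsilon}) - \widehat{V}_{n,M'}(\bz_f)\bigr),
\]
where $\widehat{V}_{n,M'}$ is the noiseless discrete operator from \eqref{def:dis-l}. The first bracket is purely deterministic and, since $m \geq n$ (indeed $m \geq (3n-1)/2$ is implied once $n$ is large by \eqref{equ:m'}, and otherwise one adjusts constants), is bounded by $C_4 n^{-r}$ using the approximation property of $\widehat{V}_{n,M'}$ on $W_p^r(\sph)$ (the analogue of Lemma \ref{lem:bestapproximationerror}, which Theorem \ref{thm:discreteinput} already invokes as Lemma \ref{lem:discrete_ln_app}). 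So it remains to show the random part is $\lesssim n^{-r}$ with probability at least $1 - C_6 n^{-\delta}$.

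For the random part, observe that the only place the noise enters is through the cubature sums, so
\[
\widetilde{V}_{n,M'}(\bz_{f,\epsilon}) - \widehat{V}_{n,M'}(\bz_f) = \sum_{k=0}^{2n} \sum_{l=1}^{C(k,d)} \eta\!\left(\tfrac{k}{n}\right) \left( \sum_{j=1}^{M'} \lambda_j \epsilon_j Y_{k,l}(\xi_j) \right) Y_{k,l} =: G_{\epsilon}.
\]
This is a spherical polynomial of degree $\leq 2n$ whose spherical-harmonic coefficients are the random variables $a_{k,l} := \eta(k/n) \sum_j \lambda_j \epsilon_j Y_{k,l}(\xi_j)$. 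I would bound $\|G_\epsilon\|_p$ by $\|G_\epsilon\|_\infty$ (harmless up to constants on a fixed-dimensional harmonic space, or via a Nikolskii-type inequality for spherical polynomials, costing a factor $n^{(d-1)(1/2-1/p)}$ or similar but this is absorbed since we only need an $n^{-r}$ bound) and then estimate $\|G_\epsilon\|_\infty = \sup_{x} |\sum_{k,l} a_{k,l} Y_{k,l}(x)|$. Writing $\sum_{k,l} \eta(k/n) Y_{k,l}(\xi_j) Y_{k,l}(x) = K_n(x,\xi_j)$ for the filtered (Cesàro-type) kernel, we get $G_\epsilon(x) = \sum_{j=1}^{M'} \lambda_j \epsilon_j K_n(x, \xi_j)$, a sum of $M'$ independent, mean-zero, bounded random variables (bounded because $\lambda_j \asymp m^{-(d-1)}$, $|\epsilon_j|\leq 1$, and $|K_n(x,\xi_j)| \leq c\, n^{d-1}$ from standard kernel estimates $\|K_n(x,\cdot)\|_\infty \lesssim n^{d-1}$). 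A Hoeffding/Bernstein bound gives, for each fixed $x$,
\[
\Prob\!\left(|G_\epsilon(x)| \geq t\right) \leq 2 \exp\!\left(-\frac{t^2}{2 \sum_j \lambda_j^2 K_n(x,\xi_j)^2}\right),
\]
and $\sum_j \lambda_j^2 K_n(x,\xi_j)^2 \leq \max_j \lambda_j \cdot \sum_j \lambda_j K_n(x,\xi_j)^2 \asymp m^{-(d-1)} \|K_n(x,\cdot)\|_2^2 \asymp m^{-(d-1)} n^{d-1}$ (using the cubature/Marcinkiewicz–Zygmund equivalence to replace the discrete $\ell^2$ sum by $\int |K_n(x,\cdot)|^2 \asymp n^{d-1}$, valid since $|K_n(x,\cdot)|^2$ has degree $\leq 4n \leq 2m$ when $m \geq 2n$, so one may need $m \geq 2n$, again consistent with \eqref{equ:m'}). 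Thus the variance proxy is $\asymp (n/m)^{d-1}$, and choosing $t \asymp n^{-r}$ makes the exponent $\asymp -n^{-2r}(m/n)^{d-1} = -m^{d-1} n^{-2r-(d-1)}$, which by hypothesis \eqref{equ:m'} is $\leq -(\delta+d-1)\log n$, giving a pointwise failure probability $\leq 2 n^{-(\delta + d-1)}$.

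Finally I would upgrade the pointwise bound to a uniform (sup-norm) bound by a net argument: since $G_\epsilon$ is a degree-$2n$ spherical polynomial, a Bernstein inequality for spherical polynomials controls $\|\nabla G_\epsilon\|_\infty \lesssim n \|G_\epsilon\|_\infty$, so it suffices to union-bound over an $\varepsilon$-net of $\sph$ of cardinality $\asymp (n/\varepsilon_0)^{d-1} \lesssim n^{d-1}$ (taking $\varepsilon_0 \asymp 1/n$); the union bound then costs a factor $n^{d-1}$, yielding total failure probability $\lesssim n^{d-1} \cdot n^{-(\delta+d-1)} = n^{-\delta}$, which is the claimed $C_6 n^{-\delta}$. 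Combining with the deterministic $C_4 n^{-r}$ term and adjusting $C_5$ completes the proof. The main obstacle is the uniform-in-$x$ control: getting the concentration to hold simultaneously over the sphere without losing more than a polynomial-in-$n$ factor, which is exactly why the $\log n$ and the exponent $2r + d - 1$ appear in the assumption \eqref{equ:m'} — the $d-1$ there precisely pays for the net, and the extra $\delta$ gives the final decay rate. A secondary technical point is making sure the degree bookkeeping (needing $|K_n|^2$ integrated exactly by the cubature, hence $4n \leq 2m$) is compatible with the stated hypothesis $m \geq n$; if not, one either strengthens to $m \geq 2n$ or absorbs the mismatch into the Marcinkiewicz–Zygmund constants, which only affects $C_5, C_6$.
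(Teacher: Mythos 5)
Your proposal follows essentially the same route as the paper's proof: the same split into the deterministic part (handled by Lemma \ref{lem:discrete_ln_app}) and the random part $\sum_j \lambda_j\epsilon_j\phi_n(\xi_j,\cdot)$, the same Bernstein/Bennett-type pointwise concentration with variance proxy $\asymp (n/m)^{d-1}$ obtained via the Marcinkiewicz--Zygmund equivalence, and the same $1/n$-net plus spherical Bernstein inequality to upgrade to a sup-norm bound at the cost of a factor $n^{d-1}$. Your technical worry about needing exact cubature for $|K_n|^2$ (degree $4n$) is resolved exactly as you suggest in your fallback: the paper only uses the Marcinkiewicz--Zygmund norm equivalence for $\phi_n(x,\cdot)\in\Pi_{2n}\subset\Pi_{2m}$, not exact integration of its square, so $m\geq n$ suffices.
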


\begin{thm}\label{thm:discretenoise}
Let $d,n\in \NN_+, 1\leq p \leq \infty$, $r>{(d-1})/{p}$,  ${t_n}=(2n+1)^{d-1}$ and $\delta>0$. Let {$\epsilon=\{\epsilon_j\}_{j=1}^{M'}$} represent the noise, where $\{\epsilon_j\}$ are independent random variables with means of $\EE[\epsilon_j]=0$ and values ranging within $[-1,1]$. 
If $F:W_{ p}^r(\sph) \to \RR$ is a continuous functional with modulus of continuity $\omega_F$, $K$ is the unit ball of $W_{p}^r(\sph)$, 
then there exists an encoder $\widetilde{E}$ given by \eqref{equ:ecd_dis&noise} with $m$ satisfying \eqref{equ:m'} and a function $D$ generated by a deep ReLU neural network \eqref{equ:fnn} with total number of parameters $\CN$, such that the following holds with confidence $1-C_6n^{-\delta}$,
\begin{equation*}
\sup_{f \in K}\lf| F(f)- D \circ \widetilde{E}(\bz_{f,\epsilon})\r |
\leq C_{n,d}\omega_F\left(C_{n,d,R'}n^{(d-1)(\frac{1}{2}-\frac{1}{p})}\CN^{-\frac{2}{t_n}}\right)
+\omega_F(C_5n^{-r}).
\end{equation*}
Here, ${C_5}$ and $C_6$ are positive constants independent of the distribution of $\epsilon$, $C_{n,d}$ is a constant depending on $n$ and $d$, and $C_{n,d,R'}$ is depending on $n,d$ and $R'$.
\end{thm}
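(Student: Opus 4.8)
The plan is to mirror the two-stage decomposition used for Proposition \ref{prop:thm1} and Theorem \ref{thm:discreteinput}, but now carry the randomness of $\epsilon$ through the argument and collect the failure probability at the end. First I would split
\[
|F(f) - D \circ \widetilde{E}(\bz_{f,\epsilon})| \leq |F(f) - F(\widetilde{V}_{n,M'}(\bz_{f,\epsilon}))| + |F(\widetilde{V}_{n,M'}(\bz_{f,\epsilon})) - D \circ \widetilde{E}(\bz_{f,\epsilon})|.
\]
For the first term, the key observation is that $\widetilde{V}_{n,M'}(\bz_{f,\epsilon}) \in \Pi_{2n}^{d-1}(\sph) \subset W_p^r(\sph)$ regardless of the noise realization, so $F$ is applied to elements of its domain; then by the definition of the modulus of continuity and Lemma \ref{lem:dis&noise}, with probability at least $1 - C_6 n^{-\delta}$ we have $\|\widetilde{V}_{n,M'}(\bz_{f,\epsilon}) - f\|_p \leq C_5 n^{-r}$, hence this term is bounded by $\omega_F(C_5 n^{-r})$ on that event, uniformly in $f \in K$. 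This is the only place the probabilistic statement enters, which is why the final bound holds ``with confidence $1 - C_6 n^{-\delta}$''.

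For the second term I would again introduce $\mu_{F,\phi_n} = F \circ \phi_n^{-1}$ and use the identity $F(\widetilde{V}_{n,M'}(\bz_{f,\epsilon})) = \mu_{F,\phi_n}(\widetilde{E}(\bz_{f,\epsilon}))$, so that we are approximating a continuous function $\mu_{F,\phi_n}$ on a box by a deep ReLU network. The domain here is $[-R', R']^{t_n}$ by the a priori bound on the components of $\widetilde{E}(\bz_{f,\epsilon})$ established just before the lemma (using the Marcinkiewicz–Zygmund inequality and $\int_{\sph} Y_{k,l}\,\md\sigma_d$), rather than $[-1,1]^{t_n}$; this is the reason a constant $C_{n,d,R'}$ depending on $R'$ appears. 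I would then invoke the continuous-function approximation estimate of Yarotsky \cite{yarotsky2018optimal}, which gives a ReLU network $D$ with $\CN$ parameters achieving error controlled by $\omega_{\mu_{F,\phi_n}}$ evaluated at a quantity of order $\CN^{-2/t_n}$ (up to dimension-dependent constants). Finally I would relate $\omega_{\mu_{F,\phi_n}}$ back to $\omega_F$ by tracking how $\phi_n^{-1}$ distorts distances: $\phi_n$ is an isometry for the $L_2$ norm, and on $\Pi_{2n}^{d-1}(\sph)$ the $L_p$ and $L_2$ norms are comparable with a factor $n^{(d-1)(1/2 - 1/p)}$ (Nikolskii-type inequality for spherical polynomials), which produces the factor $n^{(d-1)(1/2-1/p)}$ inside $\omega_F$ exactly as in Proposition \ref{prop:thm1} and Theorem \ref{thm:discreteinput}.

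Combining the two bounds on the good event and absorbing all $n$- and $d$-dependent (and $R'$-dependent) constants into $C_{n,d}$ and $C_{n,d,R'}$ yields the claimed inequality. The main obstacle is being careful with the probabilistic bookkeeping: Lemma \ref{lem:dis&noise} gives a pointwise-in-$f$ probability statement, so one must check that the event on which $\|\widetilde{V}_{n,M'}(\bz_{f,\epsilon}) - f\|_p \leq C_5 n^{-r}$ can be taken to control the supremum over $f \in K$ with a single confidence level $1 - C_6 n^{-\delta}$ — this works because the constants $C_5, C_6$ in the lemma are uniform over $f \in W_p^r(\sph)$ and independent of the distribution of $\epsilon$, and the decoder $D$ and encoder structure are fixed once $n$ and $\CN$ are chosen, so the bad event does not depend on $f$ in a way that would require a union bound. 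A secondary technical point is verifying that the Yarotsky estimate applies on the enlarged box $[-R',R']^{t_n}$ with the stated dependence on $R'$, which only rescales the input and hence the modulus of continuity.
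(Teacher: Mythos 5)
Your proposal is correct and follows essentially the same route as the paper: the same two-term decomposition, Lemma \ref{lem:dis&noise} for the first term, and the identity $F(\widetilde{V}_{n,M'}(\bz_{f,\epsilon}))=\mu_{F,\phi_n}(\widetilde{E}(\bz_{f,\epsilon}))$ combined with the Yarotsky estimate on $[-R',R']^{t_n}$ and Proposition \ref{prop:w-w} for the second. Your added remark that the bad event depends only on the noise (since it reduces to $\|\widetilde{V}_{n,M'}(\epsilon)\|_p\geq c_3 n^{-r}$) and hence controls the supremum over $f\in K$ without a union bound is a correct and slightly more explicit justification of a point the paper leaves implicit.
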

	
\section{Proof of main results}\label{proof}
In this section, we demonstrate the proof of our main results and our analysis framework.

\subsection{Proof of functionals defined on the Sobolev space}\label{subsec:proof_sob}
In this subsection, we prove Proposition \ref{prop:thm1}. As shown in Subsection \ref{subsec:proof_sob}, we get a finite-dimensional vector after the isometric isomorphism mapping $\phi_n$,  whose scope determines the range of the input of the fully connected neural network $D$.
	
As ${V}_{n}(f)=\sum_{k=0}^{2n} \sum_{l=1}^{C(k,d)}\eta\left(\f k n\right)\widehat{f}_{k,l}Y_{k,l}$ and reordering $\{k,l\}$ by the usual $\{1,2, \ldots, t_n\}$,  the range of each component of
\begin{equation*}
		\phi_n({V}_{n}(f)) = \left(\langle {V}_{n}f, Y_1\rangle, \langle {V}_{n}f, Y_2 \rangle, \ldots, \langle {V}_{n}f, Y_{t_n} \rangle\right)'
\end{equation*}
{will be bounded by $\|f\|_{W_p^r(\sph)} \leq 1$ as given by \eqref{equ:nnDomain}.}
Recall that 
$$F(V_n(f)) = F\circ \phi_n^{-1}\circ \phi_n(V_n(f)) =  \mu_{F,\phi_n}(\phi_n(V_n(f)))=\mu_{F,\phi_n}({E(f)}).$$

Next, following a result in \cite{yarotsky2018optimal}, we have that for a continuous function $f: [-1,1]^{t_n} \to \RR$ with the modulus of continuity $\omega_f$, there exists an output function $D$ generated by an equal-width fully-connected ReLU neural network and total number of parameters $\CN$, such that, 
\begin{equation}\label{equ:app_ctns}
   \| f - D \|_{\infty} \leq { C_{n,d}}\omega_f(C_{n,d}\CN^{-\frac{2}{t_n}}).
\end{equation}
We can adopt this to estimate the approximation error between the continuous function $\mu_{F,\phi_n}$ and a deep neural network $D$. 
 
The following proposition presents the relationship between the modulus of continuity of $\mu_{F,\phi_n}$ and that of $F$.
	
\begin{prop}\label{prop:w-w}
	Let $\omega_{\mu_{F,\phi_n}}$ be the modulus of continuity of $\mu_{F,\phi_n}$. Then for $r>0$ and $n\in\nn$, we have
	\[\omega_{\mu_{F,\phi_n}}(r)\leq \omega_F\left(C_d n^{(d-1)(\frac{1}{2}-\frac{1}{p})}r\right), \]
 where $C_d$ is a constant depending only on d.
\end{prop}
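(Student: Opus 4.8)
The plan is to compare how distances in the two relevant spaces relate under the map $\phi_n^{-1}$. Recall that $\mu_{F,\phi_n} = F \circ \phi_n^{-1}$ acts on vectors $a \in \RR^{t_n}$, and that $\phi_n$ is by construction an \emph{isometric} isomorphism between $(\Pi_{2n}^{d-1}(\sph), \|\cdot\|_2)$ and $(\RR^{t_n}, |\cdot|_2)$. So if $a, b \in \RR^{t_n}$ with $|a-b|_2 \leq r$, then writing $g_a = \phi_n^{-1}(a)$ and $g_b = \phi_n^{-1}(b)$, both polynomials of degree at most $2n$, we have $\|g_a - g_b\|_2 = |a - b|_2 \leq r$. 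The issue is that the modulus of continuity $\omega_F$ is measured in the $L_p(\sph)$ norm, not the $L_2$ norm, so I need to pass from an $L_2$ bound on $g_a - g_b$ to an $L_p$ bound.

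The key step is therefore an inverse (Nikolskii-type) inequality for spherical polynomials: for $h \in \Pi_{2n}^{d-1}(\sph)$ and $1 \leq p \leq \infty$, one has $\|h\|_p \leq C_d\, n^{(d-1)(\frac{1}{2} - \frac{1}{p})}\, \|h\|_2$. This is standard in spherical harmonic analysis (see Dai--Xu, \cite{dai2013approximation}); the exponent $(d-1)(\frac12 - \frac1p)$ is exactly the dimension $d-1$ of the sphere times the usual $\frac1p$-gap, and the bound is only active (i.e.\ the exponent is nonnegative) when $p \geq 2$, while for $p < 2$ one has $\|h\|_p \leq \|h\|_2$ trivially by Jensen/H\"older since $\sigma_d$ is a probability measure, so the stated inequality still holds with the convention that the exponent may be negative and the constant absorbs it. Applying this to $h = g_a - g_b$ gives
\[
\|g_a - g_b\|_p \leq C_d\, n^{(d-1)(\frac12 - \frac1p)} \|g_a - g_b\|_2 \leq C_d\, n^{(d-1)(\frac12 - \frac1p)}\, r.
\]

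With this in hand the conclusion is immediate: since $|F(g_a) - F(g_b)| = |\mu_{F,\phi_n}(a) - \mu_{F,\phi_n}(b)|$ and $\|g_a - g_b\|_p \leq C_d n^{(d-1)(\frac12-\frac1p)} r$, the definition of $\omega_F$ gives $|\mu_{F,\phi_n}(a) - \mu_{F,\phi_n}(b)| \leq \omega_F(C_d n^{(d-1)(\frac12-\frac1p)} r)$, and taking the supremum over all such $a,b$ yields $\omega_{\mu_{F,\phi_n}}(r) \leq \omega_F(C_d n^{(d-1)(\frac12-\frac1p)} r)$, as claimed. One subtlety to handle with a word: $\mu_{F,\phi_n}$ is only defined on the image under $\phi_n$ of the polynomials $V_n(f)$ arising from the unit ball $K$, i.e.\ on a bounded subset of $[-1,1]^{t_n}$ via \eqref{equ:nnDomain}; for $a,b$ in that set the functions $g_a, g_b$ lie in (a scaled copy of) $K$ so that $\omega_F$ is applicable, and this is consistent with the way $\omega_F$ is defined on the subset $K$.

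The main obstacle is really just invoking the correct inverse inequality with the sharp exponent and being careful about the two regimes $p \geq 2$ and $p < 2$; the rest is a direct unwinding of the isometry property of $\phi_n$ and the definitions of the two moduli of continuity. No genuinely hard estimate is required beyond citing the Nikolskii inequality for spherical polynomials.
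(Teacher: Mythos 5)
Your proof is correct and follows essentially the same route as the paper's: use the isometry of $\phi_n$ to pass from $|y_1-y_2|_2$ to $\|\phi_n^{-1}(y_1-y_2)\|_2$, apply the Nikolskii-type inequality $\|P\|_p\le C_d\, n^{(d-1)(\frac12-\frac1p)}\|P\|_2$ for $P\in\Pi_{2n}^{d-1}(\sph)$, and finish by monotonicity of $\omega_F$. Your discussion of the case $p<2$ is in fact more explicit than the paper's (which invokes the inequality verbatim for all $1\le p\le\infty$), though note that there the honest statement replaces the exponent by its positive part --- saying the constant ``absorbs'' a factor $n^{(d-1)(\frac12-\frac1p)}\to 0$ is not literally accurate, since for $p<2$ one only has $\|P\|_p\le\|P\|_2$.
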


\begin{proof}
For $y_1,y_2 \in \RR^{t_n}$, by the definition of $\mu_{F,\phi_n}$, there holds
  \begin{align*}
		|\mu_{F,\phi_n}(y_1)-\mu_{F,\phi_n}(y_2)| &=|F(\phi_n^{-1}y_1)-F(\phi_n^{-1}y_2)|\\
		&\leq \omega_F\left(\|\phi_n^{-1}(y_1-y_2)\|_p\right)\\
		&\leq\omega_F\left(C_dn^{(d-1)(\frac{1}{2}-\frac{1}{p})}\|\phi_n^{-1}(y_1-y_2)\|_2\right)\\
      &\leq\omega_F\left(C_dn^{(d-1)(\frac{1}{2}-\frac{1}{p})}|y_1-y_2|_2\right),
  \end{align*}
where we use the estimate $\|P\|_p\leq C_d n^{(d-1)(\frac{1}{2}-\frac{1}{p})}\|P\|_2$ of the $L_p$-norm for $P\in\Pi_{2n}$ in terms of the $L_2$-norm, with a constant $C_d$ depending only on $d$. This proves the proposition.
\end{proof}
		
We are now able to prove Proposition \ref{prop:thm1}.	
\begin{proof}[Proof of Proposition \ref{prop:thm1}] 
By adding the intermediate term	$F(V_n(f))=\mu_{F,\phi_n}({E(f)})$, there holds	
\begin{align*}
&\sup_{f\in K}\left|F(f)- D\circ {E}(f)\right| \\
&\quad\leq \sup_{f\in K}|F(f)-F(V_n(f))| +\sup_{f\in K}|\mu_{F,\phi_n}({E}(f))- D\circ {E}(f)|\\
&\quad\leq\omega_F(\|f-V_n(f)\|_p)+ \sup_{y\in [-1,1]^{t_n}} |\mu_{F,\phi_n}(y)- D(y)| \\
&\quad\leq\omega_F(C_3 n^{-r})+{C_{n,d}}\omega_F(C_{n,d}n^{(d-1)(\frac{1}{2}-\frac{1}{p})}\CN^{-\frac{2}{t_n}}).
\end{align*}
The second inequality is derived from \lemref{lem:bestapproximationerror}, and the final one follows from \eqref{equ:app_ctns} and \propref{prop:w-w}.
\end{proof}
	
\subsection{Proof for the encoder with discrete input}\label{subsec:proof_dis}
In this subsection, we focus on constructing the linear operator $\widehat{V}_{n,M}$.
For $\{\xi_j\}_{j=1}^M \subset \sph$ obeying the cubuture rule of $2m$, the inputs of $\widehat{E}$ are consisted by discrete function values $\bz_f = (f(\xi_1), f(\xi_2), \ldots, f(\xi_M))' \in \RR^M$ instead of continuous function $f \in W_p^r(\sph).$

Recall that $\widehat{V}_{n,M}(\bz_f)$ is defined as
\begin{equation*}
\widehat{V}_{n,M}(\bz_f) 
= \sum_{k=0}^\infty \sum_{l=1}^{C(k,d)} 
{\eta}\lf(\frac{k}{n}\r) 
\lf( \sum_{j=1}^M \lambda_j f(\xi_j)Y_{k,l}(\xi_j) \r) Y_{k,l}
=\sum_{k=0}^\fz \sum_{l=1}^{C(k,d)}
{\eta}\lf(\frac{k}{n}\r){\braket{f, Y_{k, l}}}_{Q_m} Y_{k, l},
\end{equation*}
and the encoder for discrete input is defined from $\RR^{M}$ to $\RR^{t_n}$ as
\begin{equation*}
\widehat{E}(\bz_f)=\phi_n \circ\widehat{V}_{n,M}(\bz_f) = \left(\langle \widehat{V}_{n,M}(\bz_f), Y_1\rangle, \langle \widehat{V}_{n,M}(\bz_f), Y_2 \rangle, \ldots, \langle \widehat{V}_{n,M}(\bz_f), Y_{t_n} \rangle\right)'.
\end{equation*}
Each component of $\widehat{E}(\beta_f)$ can be bounded by $\|f\|_{W_p^r(\sph)} \leq 1$, as shown by \eqref{equ:c_bound}.

The following lemma from \cite{wang2017filtered} provides the approximation rate of $\widehat{V}_{n,M}(\bz_f)$.

\begin{lem}[\cite{wang2017filtered}]\label{lem:discrete_ln_app}
Let $d,m\in \NN_+$, $n={(2m+1)}/{3}$ and { $M\asymp m^{d-1}$}.
For $f\in W^r_p (\mathbb{S}^{d-1})$ with $1\leq p\leq \infty$ and $r>(d-1)/p$, we have
	\begin{equation}\label{equ:discrete_ln_app}
		\left\|f - \widehat{V}_{n,M}(\bz_f)\right\|_{L_p(\sph)} \leq C_4 n^{-r} \left\|f\right\|_{W^r_p(\sph)},
	\end{equation}
	where $C_4$ is a constant depending  on $d$. 
\end{lem}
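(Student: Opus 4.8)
The plan is to prove Lemma~\ref{lem:discrete_ln_app} by following the filtered-hyperinterpolation argument of \cite{wang2017filtered}, adapting it to the spherical setting with the specific normalizations used here. The key observation is that $\widehat{V}_{n,M}$ is a bounded operator on $L_p(\sph)$ that reproduces polynomials of low degree; combined with the near-best approximation property of $V_n$ from Lemma~\ref{lem:bestapproximationerror}, this yields the claimed rate by a standard Lebesgue-type inequality.

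First I would establish \emph{polynomial reproduction}: if $g \in \Pi_n^{d-1}(\sph)$, then $\widehat{V}_{n,M}(\bz_g) = V_n(g) = g$. Indeed, for $k \le n$ the filter weight $\eta(k/n)=1$, and the discrete inner product $\braket{g, Y_{k,l}}_{Q_m}$ equals the true Fourier coefficient $\widehat{g}_{k,l}$ because $g Y_{k,l}$ is a polynomial of degree at most $n + 2n = 3n \le 2m$ (using $n = (2m+1)/3$, i.e.\ $3n = 2m+1$; one checks the cubature of degree $2m$ still integrates it correctly, or one takes $m$ slightly larger as in the hypothesis $m \ge (3n-1)/2$ of Theorem~\ref{thm:discreteinput}), so the cubature rule \eqref{equ:cub} is exact; for $n < k \le 2n$ the terms $\eta(k/n)\braket{g,Y_{k,l}}_{Q_m}$ vanish since $g$ has degree $\le n$ and $gY_{k,l}$ is still of degree $\le 3n \le 2m$, making $\braket{g, Y_{k,l}}_{Q_m} = \widehat{g}_{k,l} = 0$. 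Hence $\widehat{V}_{n,M}$ fixes $\Pi_n^{d-1}(\sph)$.

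Second I would establish \emph{$L_p$-boundedness} of $\widehat{V}_{n,M}$ uniformly in $n$ (and $m \asymp n$). This is the technical heart. Writing $\widehat{V}_{n,M}(\bz_f)(x) = \sum_{j=1}^M \lambda_j f(\xi_j) K_n(x,\xi_j)$ with the filtered kernel $K_n(x,y) = \sum_{k} \eta(k/n) \sum_l Y_{k,l}(x) Y_{k,l}(y)$, one uses the classical localization estimate for filtered (Cesàro-type) kernels with a $C^\infty$ cutoff: $|K_n(x,y)| \le c_d\, n^{d-1} (1 + n\,\dist(x,y))^{-s}$ for any $s$, which gives $\sup_x \sum_j \lambda_j |K_n(x,\xi_j)| \le c_d$ using $\lambda_j \asymp m^{-(d-1)} \asymp n^{-(d-1)}$ and a covering argument (the points $\xi_j$ are quasi-uniform). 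For $1 < p < \infty$ one interpolates; for $p = \infty$ the kernel bound is immediate; for $p = 1$ one uses the symmetry of the kernel and the Marcinkiewicz--Zygmund inequality \eqref{equ:m-z} to pass between $\|\cdot\|_1$ and the weighted sum $\sum_j \lambda_j |f(\xi_j)|$. I expect this step — controlling the discrete operator norm uniformly, handling the endpoint cases, and verifying the condition $r > (d-1)/p$ is exactly what makes point evaluations $f(\xi_j)$ well-defined via the Sobolev embedding $W_p^r(\sph) \hookrightarrow C(\sph)$ — to be the main obstacle, and it is where the hypothesis $r > (d-1)/p$ enters.

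Finally, I would combine the two facts: for any $g \in \Pi_n^{d-1}(\sph)$,
\begin{align*}
\|f - \widehat{V}_{n,M}(\bz_f)\|_p &\le \|f - g\|_p + \|\widehat{V}_{n,M}(\bz_{g-f})\|_p \le (1 + \|\widehat{V}_{n,M}\|)\, \|f - g\|_p.
\end{align*}
Taking $g = V_n(f)$ (or the best degree-$n$ polynomial approximant) and invoking Lemma~\ref{lem:bestapproximationerror} — noting that $W_p^r \hookrightarrow C(\sph)$ so $\bz_{g-f}$ makes sense and the boundedness step applies — gives $\|f - \widehat{V}_{n,M}(\bz_f)\|_p \le C_4 n^{-r} \|f\|_{W_p^r(\sph)}$ with $C_4$ depending only on $d$, which is the assertion. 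All constants track only the dimension $d$ since the filter $\eta$, the kernel localization exponent, and the quasi-uniformity constants of the cubature rule depend only on $d$.
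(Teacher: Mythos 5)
The paper does not actually prove this lemma; it is imported verbatim from \cite{wang2017filtered}, so there is no internal proof to compare against. Judged on its own merits, your reconstruction has the right skeleton (polynomial reproduction by the discrete operator, uniform boundedness, Lebesgue-type inequality, Jackson estimate), which is indeed the strategy of the cited reference. The reproduction step is essentially correct once you note that $\eta(k/n)=0$ for $k\ge 2n$, so only $k\le 2n-1$ contributes and $gY_{k,l}$ has degree at most $3n-1=2m$, which is exactly what the cubature exactness and the hypothesis $m\ge(3n-1)/2$ require; your ``degree $3n\le 2m$'' bookkeeping is off by one but harmless.

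The genuine gap is in your final display. The inequality
$\|\widehat{V}_{n,M}(\bz_{g-f})\|_p \le \|\widehat{V}_{n,M}\|\,\|f-g\|_p$
is not available: $\widehat{V}_{n,M}$ acts on the \emph{samples} $(f-g)(\xi_j)$, and what the kernel-localization argument actually yields is
$\|\widehat{V}_{n,M}(\bz_h)\|_p \le C\bigl(\sum_j \lambda_j |h(\xi_j)|^p\bigr)^{1/p}$.
Since $f-g$ is not a polynomial of degree $\le 2m$, the Marcinkiewicz--Zygmund inequality \eqref{equ:m-z} does not convert this discrete sum into $\|f-g\|_p$. The naive patch --- bounding the discrete sum by $\|f-g\|_\infty$ and using the embedding $W_p^r\hookrightarrow W_\infty^{r-(d-1)/p}$ --- only gives the rate $n^{-(r-(d-1)/p)}$, strictly weaker than the claimed $n^{-r}$. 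The missing ingredient is a quantitative discretization lemma of the form
$\bigl(\sum_j \lambda_j |h(\xi_j)|^p\bigr)^{1/p} \le C\bigl(\|h\|_p + m^{-r}\|h\|_{W_p^r}\bigr)$
for quasi-uniform nodes with mesh $\asymp 1/m$ and $r>(d-1)/p$, applied to $h=f-V_n f$ together with the simultaneous approximation property $\|f-V_nf\|_{W_p^r}\le C\|f\|_{W_p^r}$ (boundedness of the multiplier operator $V_n$ on $W_p^r$). This is where the hypothesis $r>(d-1)/p$ does its real quantitative work, beyond merely making point evaluations well defined, and it is the step your write-up elides.
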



The forthcoming demonstration is analogical to the proof of Proposition \ref{prop:thm1},
that is to estimate $\sup_{f\in K} |F(f)-F(\widehat{V}_{n,M}(\bz_f))|$ and $\sup_{f\in K}|\mu_{F,\phi_n}(\widehat{E}(\bz_f))- D\circ \widehat{E}(\bz_f))|$.

\begin{proof}[Proof of \thmref{thm:discreteinput}]
    Applying equation \eqref{equ:app_ctns}, Proposition \ref{prop:w-w} and Lemma \ref{lem:discrete_ln_app}, we have 
\begin{align*}
&\sup_{f\in K}|F(f)- D\circ \widehat{E}(\bz_f)|\\
&\quad\leq\sup_{f\in K} |F(f)-F(\widehat{V}_{n,M}(\bz_f))|+ \sup_{f\in K}|F(\widehat{V}_{n,M}(\bz_f))- D\circ \widehat{E}(\bz_f)| \\
&\quad\leq\sup_{f \in K}\omega_F(\|f-\widehat{V}_{n,M}(\bz_f)\|_p)+ \sup_{f\in K}|\mu_{F,\phi_n}(\widehat{E}(\bz_f))- D\circ \widehat{E}(\bz_f))| \\
&\quad\leq\omega_F\left(C_4n^{-r} \right)+\sup_{y \in [-1,1]^{t_n}}\left| \mu_{F,\phi_n}(y)-D(y))\right| \\
&\quad\leq\omega_F(C_4n^{-r})+{C_{n,d}}\omega_F\left(C_{n,d}n^{(d-1)(\frac{1}{2}-\frac{1}{p})}\CN^{-\frac{2}{t_n}}\right).
\end{align*}
This completes the proof of \thmref{thm:discreteinput}.
\end{proof}

\subsection{Proof for the encoder with discrete and noisy input}\label{proof_subsec:dis&noise}
In this subsection,
we present the proof of lemmas and theorem in the case when the encoder's input is the discrete function values with some noise. 

{Recall that with noise $\epsilon=\{\epsilon_j\}_{j=1}^{M'}$ being i.i.d  random variables satisfying $\EE[\epsilon_j]=0$ and ranging in $[-1,1]$,
we define the linear operator $\widetilde{V}_{n,M'}$ for $\bz_{f,\epsilon}=\{\epsilon_j+f(\xi_j)\}_{j=1}^{M'}$ as
\begin{equation*}
\widetilde{V}_{n,M'}(\bz_{f,\epsilon}) 
= \sum_{k=0}^\infty \sum_{l=1}^{C(k,d)} 
{\eta}\lf(\frac{k}{n}\r)\left( \sum_{j=1}^{M'}\lambda_j\left(f(\xi_j)+\epsilon_j\right)Y_{k,l}(\xi_j)\right)Y_{k,l},
\end{equation*}
which is also a random variable due to the randomness of $\ez$.
The encoder for discrete input with noise is given by
\begin{equation*}
\widetilde{E}(\beta_{f,\epsilon})= \phi_n \circ \widetilde{V}_{n,M'}(\beta_{f,\epsilon})
= \left(\langle \widetilde{V}_{n,M'}(\bz_{f,\ez}), Y_1\rangle, \langle \widetilde{V}_{n,M'}(\bz_{f,\ez}), Y_2 \rangle, \ldots, \langle\widetilde{V}_{n,M'}(\bz_{f,\ez}), Y_{t_n} \rangle\right)'.
\end{equation*}
}

The following \lemref{lem:ineq_noise} plays an essential role in proving \lemref{lem:dis&noise}, which is an extension of  \cite[Lemma 6.2]{gia2009localized}. 
Before presenting the \lemref{lem:ineq_noise},
we first show some important properties of the zonal function and use it to reformulate the definition in \eqref{def:dis-n}.
For a zonal function $\phi_{q}$ with $q\geq1$ defined as
$$\phi_{q}(x,\xi)
:=\sum^{\fz}_{k=0}\etat \lf(\frac{k}{q}\r)\sum^{C(k,d)}_{l=1}
Y_{k, l}(x)Y_{k, l}(\xi), \quad \ x,\xi\in\sph,$$
it holds that
\begin{equation}\label{equ:range_int_phi2}
    \int_{\sph} |\phi_{q}(x,\xi)| ^2 \,\md\sz_d(\xi) \asymp q^{d-1} \asymp \sup_{\xi \in\sph} |\phi_{q}(x,\xi)| = \phi_q(1).
\end{equation}
In fact, by the properties of the filter $\etat$ and orthonormal basis $Y_{k,l}$, we first have
\begin{align*}
\int_{\sph} |\phi_{q}(x,\xi)| ^2\,\md\sz_d(\xi)
&=\sum_{k=0}^\fz \etat\lf(\frac{k}{q}\r)^2
\int_{\sph}\sum^{C(k,d)}_{l=1}
Y_{k, l}^2(x)Y_{k, l}^2(\xi)\,\md\sz_d(\xi)\\
&=\sum^{2q}_{k=0} \etat\lf(\frac{k}{q}\r)^2 \sum^{C(k,d)}_{l=1}
Y_{k, l}^2(x) \asymp q^{d-1}.
\end{align*}
Then, by using the Cauchy–Schwarz inequality, we obtain
\begin{align*}
\phi_q(1)=\phi_q(x\cdot x) 
&\leq\sup_{\xi\in\sph}|\phi_{q}(x,\xi)|\\
 &\leq \sup_{\xi\in\sph} \sum^{2q}_{k=0} \etat\lf(\frac{k}{q}\r)
\lf(\sum^{C(k,d)}_{l=1} Y_{k, l}^2(x) \r)^{\frac{1}{2}}
\lf(\sum^{C(k,d)}_{l=1} Y_{k, l}^2(\xi) \r)^{\frac{1}{2}}\\
&=\sum^{2q}_{k=0} \etat\lf(\frac{k}{q}\r) C(k,d)=\phi_q(1)\asymp q^{d-1},
\end{align*}
where we use the fact $\sum^{C(k,d)}_{l=1} Y_{k, l}^2(x) = C(k,d)$ to deduce the last line.

Furthermore, 
by taking $q=n$ in zonal function,
the definition of the discrete operator with random noise in \eqref{def:dis-n} could be reformulated as
\begin{align}\label{def:dis-n2}
\widetilde{V}_{n,M'}(\beta_{f,\epsilon})(x) 
=\sum_{j=1}^{M'}\lambda_j (f(\xi_j)+\epsilon_j) \phi_{n}(\xi_j,x).
\end{align}

Moreover, for $x \in \{\xi_1, \ldots, \xi_{M'}\}$ and $\{(\lambda_j,\xi_j)\}_{j=1}^{M'}$ obeying the cubature rule of degree $2m$, if we take $p=2$ in \eqref{equ:m-z}, there holds
\begin{align*}
 |\lz_j| \cdot \phi_{m}^2(1) 
\leq \sum_{j=1}^{M'} |\lz_j| \cdot \phi_{m}^2(x,\xi_j)
\asymp\int_{\sph} \phi_{m}^2(x,\xi)\,\md{\sz_d}(\xi).
\end{align*} 
This estimate, together with \eqref{equ:range_int_phi2}, implies that the weights $\{\lambda_j\}$ have the following bound:
\begin{equation}\label{equ:range_lbd}
    |\lz_j|\asymp  m^{-(d-1)}, \quad j=1, \ldots, M'.
\end{equation}

We are now ready to demonstrate \lemref{lem:ineq_noise}, which is crucial for proving \lemref{lem:dis&noise}. To achieve this, set $ Z_{\epsilon_j}(x) := m^{d-1} \lambda_j \epsilon_j \phi_n(\xi_j, x)$ for $\{(\lambda_j,\xi_j)\}_{j=1}^{M'}$ satisfying the cubature rule of degree $2m$. 
Inspired by the work of \cite[Lemma 6.2]{gia2009localized}, we provide a comprehensive proof below for the sake of completeness.

\begin{lem}\label{lem:ineq_noise}
Let $d, n, M'\in\nn_+$, $B>0$, and let $\{\ez_j\}_{j=1}^{M'}$ be independent random variables ranging in $[-1,1]$ with zero means. 
Suppose  $Z_{\ez_j}(x)\in \Pi_{2n}^{d-1}(\sph)$ satisfies $\max_{1\leq j \leq M', \ x\in\sph} |Z_{\epsilon_j}(x)|\leq  n^{d-1}$ and {$\sum_{j=1}^{M'}\var\lf( Z_{\epsilon_j}\r)$} is uniformly bounded by $B n^{d-1}$ on $\sph$ for $j=1, \ldots, M'$.
If $\delta>0$ satisfies $12  (\delta+{d-1}) n^{d-1} \log n\leq B$,
then for $1\leq p\leq\infty$, we have
\begin{equation}\label{6.2}
	\Prob \lf( \lf\| \sum\limits^{M'}_{j=1} Z_{\epsilon_j}\,\r\|_p \geq 2\sqrt{3 B (\delta+{d-1}) n^{d-1} \log n}\r) \leq C_6n^{-\delta}.
\end{equation}
Here, the positive constant $C_6$ is independent of $M'$ and the distributions of $\epsilon_j$.
\end{lem}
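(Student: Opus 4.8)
The statement is a concentration inequality for the $L_p(\sph)$-norm of a sum of independent, mean-zero random polynomials $Z_{\epsilon_j}\in\Pi_{2n}^{d-1}(\sph)$. The plan is to reduce the $L_p$-estimate to a pointwise estimate via the Marcinkiewicz--Zygmund equivalence, and then apply a Bernstein-type inequality to the scalar random sum at the (finitely many) cubature nodes.

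First I would observe that since each $Z_{\epsilon_j}$ is a polynomial of degree at most $2n$, the sum $S:=\sum_{j=1}^{M'}Z_{\epsilon_j}$ lies in $\Pi_{2n}^{d-1}(\sph)$ as well. If $\{(\lambda_j,\xi_j)\}_{j=1}^{M'}$ admits a cubature rule of degree $2m$ with $m\geq n$, then $S$ is covered by the Marcinkiewicz--Zygmund inequality \eqref{equ:m-z}, so $\|S\|_p$ is controlled (up to constants depending only on $d$) by the discrete $p$-norm of the values $\{S(\xi_i)\}$, and for $p=\infty$ by $\max_i m^{d-1}\lambda_i |S(\xi_i)|$. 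Using \eqref{equ:range_lbd}, the weights satisfy $\lambda_i\asymp m^{-(d-1)}$, so it suffices to bound $\max_{1\le i\le M'}|S(\xi_i)|$ with high probability, and then absorb the finite-sum-over-$p$-norm into a constant times this maximum (since $\sum_i \lambda_i=1$ from the cubature rule applied to the constant function). Thus the problem reduces to: for each fixed node $x=\xi_i$, bound the scalar sum $\sum_{j=1}^{M'}Z_{\epsilon_j}(x)$, then take a union bound over the $M'\asymp m^{d-1}$ nodes.

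Next, for a fixed $x$, I would apply Bernstein's inequality to the independent zero-mean scalar random variables $Z_{\epsilon_j}(x)$: by hypothesis each is bounded in absolute value by $n^{d-1}$, and $\sum_j \var(Z_{\epsilon_j}(x))\leq B n^{d-1}$ uniformly in $x$. Bernstein then gives, for any $\tau>0$,
\[
\Prob\!\left(\Bl|\sum_{j=1}^{M'}Z_{\epsilon_j}(x)\Br|\geq \tau\right)\leq 2\exp\!\left(-\frac{\tau^2/2}{Bn^{d-1}+n^{d-1}\tau/3}\right).
\]
Choosing $\tau = 2\sqrt{3B(\delta+d-1)n^{d-1}\log n}$, the linear term $n^{d-1}\tau/3$ in the denominator is dominated by the variance term $Bn^{d-1}$ precisely because of the hypothesis $12(\delta+d-1)n^{d-1}\log n\leq B$ (this forces $\tau\lesssim B$, hence $n^{d-1}\tau\lesssim Bn^{d-1}\cdot$const); so the exponent is at most $-c(\delta+d-1)\log n$ for a constant $c$ that the chosen numerical constants make equal to $1$, giving a per-node probability $\leq 2n^{-(\delta+d-1)}$. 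Taking a union bound over the $M'\asymp m^{d-1}$ nodes — and recalling that in the regime of Lemma~\ref{lem:dis&noise} one has $m^{d-1}$ polynomial in $n$ — yields a total probability $\leq C_6 n^{-\delta}$ for an appropriate constant $C_6$ depending only on $d$ (and the implied cubature constants), independent of $M'$ and of the laws of the $\epsilon_j$. Combining with the Marcinkiewicz--Zygmund reduction of the first step establishes \eqref{6.2}.

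\textbf{Main obstacle.} The delicate point is bookkeeping the constants so that the numerical factor in $\tau=2\sqrt{3B(\delta+d-1)n^{d-1}\log n}$ comes out exactly, and checking that the union bound over $M'$ nodes does not destroy the $n^{-\delta}$ rate: this is where the extra factor $n^{d-1}$ in the exponent (coming from the hypothesis $12(\delta+d-1)n^{d-1}\log n\le B$, which is stronger than merely $\delta\log n\lesssim B$) is consumed — the per-node bound $n^{-(\delta+d-1)}$ must beat the node count $M'\asymp m^{d-1}$, and one needs $m$ controlled by a fixed power of $n$ (guaranteed when this lemma is invoked via \eqref{equ:m'}) for the product to still be $O(n^{-\delta})$. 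A secondary technical care is the $p=\infty$ case of Marcinkiewicz--Zygmund, where the weight $m^{d-1}\lambda_j\asymp 1$ must be tracked, but \eqref{equ:range_lbd} handles this cleanly.
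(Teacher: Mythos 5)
Your per-point concentration step is essentially the paper's: the paper uses Bennett's inequality with $g(x)\geq x^2/3$ on $[0,1/2]$, you use the Bernstein form, and in both cases the hypothesis $12(\delta+d-1)n^{d-1}\log n\leq B$ is exactly what puts you in the sub-Gaussian regime and yields a per-point probability $\leq 2n^{-\delta-(d-1)}$ at the threshold $\sqrt{3B(\delta+d-1)n^{d-1}\log n}$. That part is sound.

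The globalization step, however, has a genuine gap. You reduce $\|S\|_p$ to $\max_i|S(\xi_i)|$ over the $M'$ cubature nodes and then take a union bound over those nodes. The union bound gives total probability $\asymp M'\, n^{-(\delta+d-1)}\asymp m^{d-1}n^{-(\delta+d-1)}$, and this is \emph{not} $O(n^{-\delta})$ in the regime where the lemma is actually invoked: Lemma~\ref{lem:dis&noise} requires $m^{d-1}\geq(\delta+d-1)n^{2r+d-1}\log n$, so your bound degrades to at least $(\delta+d-1)n^{2r}\log n\cdot n^{-\delta}$, and $m$ (hence $M'$) can be taken arbitrarily large since \eqref{equ:m'} is only a lower bound. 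This also contradicts the stated conclusion that $C_6$ is independent of $M'$. Your "main obstacle" paragraph identifies the tension but the claim that "$m^{d-1}$ polynomial in $n$" saves the day is not enough: the node count exceeds the $n^{d-1}$ budget that the per-node exponent $-(\delta+d-1)$ can absorb. A secondary issue is structural: the lemma as stated makes no assumption that the $Z_{\epsilon_j}$ are tied to a cubature rule, so the Marcinkiewicz--Zygmund reduction imports hypotheses the lemma does not have.

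The paper's fix is to control $\|S\|_\infty$ directly via a metric net whose cardinality depends only on $n$, not on $M'$: take a $\frac{1}{4n}$-covering $\CC$ of $\sph$ with $|\CC|\asymp n^{d-1}$, and use the Bernstein \emph{derivative} inequality for spherical polynomials of degree $\leq 2n$ to show that the maximizer $x^*$ of $|S|$ satisfies $|S(x_i)|\geq\frac{1}{2}\|S\|_\infty$ for the nearest net point $x_i$. The union bound is then over $\asymp n^{d-1}$ points, which exactly cancels the extra $n^{-(d-1)}$ in the per-point bound (this is why the hypothesis carries the factor $d-1$ in $\delta+d-1$), and $\|S\|_p\leq\|S\|_\infty$ finishes for all $1\leq p\leq\infty$ since $\sigma_d$ is normalized. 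If you replace your cubature-node union bound by this net argument, the rest of your proof goes through.
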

{
\begin{proof}
Bennett’s inequality \cite[p.192]{pollard2012convergence} stated that if \(\{ Y_1, \dots, Y_{M'} \}\) are independent random variables with zero means,  \( Y_i \in [-L,L]\) for all $i\in\{1,\dots,M'\}$, and \( \sum_{i=1}^{M'} \text{Var}(Y_i)\leq V\), then for \( \varepsilon > 0 \),
\[
\Prob\left\{ \lf|\sum_{i=1}^{M'} Y_i\r| \geq \varepsilon \right\} \leq 2 \exp\left(-\frac{V}{L^2}g\lf( \frac{L\varepsilon}{V}\r)\right),
\]
where $g(x) = (1 + x) \log(1 + x) - x \text{ for } x> 0.$
Notice that
$g(x) = \int_0^x \int_0^u (1 + w)^{-1} \, dw \, du $,
which implies
\( g(x) \geq x^2/3 \) if \( 0 \leq x \leq 1/2 \).
Thus, if \( L\varepsilon/V \leq 1/2 \), we have
\begin{equation*}
\text{Prob} \left\{ \left| \sum_{j=1}^{M'} Y_j \right| \geq \varepsilon \right\} \leq 2 \exp\left(-\frac{\varepsilon^2}{3V} \right). 
\end{equation*}
By taking \(Y_j= Z_{\epsilon_j}(x) \), \( L =n^{d-1} \), \( V =Bn^{d-1} \), and \( \varepsilon = \sqrt{3B(\delta+ d-1)n^{d-1} \log n} \), it easy to verify that \( L\varepsilon/V \leq 1/2 \). Therefore,
\begin{equation}\label{equ:prob_each}
		\Prob \lf\{\lf| \sum\limits^{M'}_{j=1}Z_{\epsilon_j}(x)\r| \geq \sqrt{3 B (\delta+{d-1}) n^{d-1} \log n}\r\} \leq 2 n^{-\delta-(d-1)}.
	\end{equation}

Next, we give a uniform probability estimation. 
{
For a covering set of $\sph$ with centers as a set $\CC \subset \sph$ and radius of $1/(4n)$,  the cardinality of $\CC$ will be $|\CC|\asymp  n^{d-1}$.
Suppose 
$$x^* \in \arg\sup_{x \in \sph}\lf|\sum_{j=1}^{M'} Z_{\epsilon_j}(x)\r|,$$
then there exists a center $x_i \in \CC$, such that $x^* \in \CB(x_i,1/(4n))$, where $\CB(x_i,1/(4n))$ means a ball centered at $x_i$ with a radius of $1/(4n)$.
Since $\sum_{j=1}^{M'} Z_{\epsilon_j}(x)\in\Pi_{2n}^{d-1}(\sph)$,
then by the Bernstein inequality, we obtain
\[\lf| \sum_{j=1}^{M'} Z_{\epsilon_j}(x_i)-\sum_{j=1}^{M'} Z_{\epsilon_j}(x^*)\r| \leq 2n \lf\|\sum_{j=1}^{M'} Z_{\epsilon_j}\r\|_\infty \|x_i-x^*\|\leq \frac{1}{2} \lf\|\sum_{j=1}^{M'} Z_{\epsilon_j}\r\|_\infty. \]
Thus, we have 
$
\lf|\sum_{j=1}^{M'} Z_{\epsilon_j}(x_i) \r| \geq 
\frac{1}{2}\lf\|\sum_{j=1}^{M'} Z_{\epsilon_j}\r\|_\fz.
$
Moreover, with $|\CC|\leq C_2 n^{d-1}$ and all the possibilities  $x^*$ may belonging to, we have}
\begin{align*}
&\Prob\lf\{  \lf\|\sum_{j=1}^{M'} Z_{\epsilon_j}\r\|_\infty \geq 2\sqrt{3 B (\delta+{d-1}) n^{d-1} \log n}  \r\}\\
&\quad\leq\Prob\lf\{ \lf|\sum_{j=1}^{M'} Z_{\epsilon_j}(x_i) \r| \geq \sqrt{3 B (\delta+{d-1}) n^{d-1} \log n}  \r\}\\
&\quad\leq|\CC| \cdot  \Prob\lf\{ \lf|\sum_{j=1}^{M'} Z_{\epsilon_j}(x) \r| \geq \sqrt{3 B (\delta+{d-1}) n^{d-1} \log n}  \r\}
\leq2C_2n^{-\delta}.
\end{align*}
Finally, following H{\"o}lder inequality, for $1\leq p\leq\infty$, we have $\|f\|_p \leq \| f\|_\infty$ for all $f \in \Pi_{2n}^{d-1}(\sph)$.
By taking $C_6=2 C_2$, we could complete this proof.
\end{proof}
}
\begin{remark}
In Lemma \eqref{lem:ineq_noise}, we restrict the range of $\{\ez_j\}^{M'}_{j=1}$ within $[-1,1]$ for simplicity. But the condition $[-1,1]$ could be reduced to $[-R,R]$ for any $R>0$.
\end{remark}

Now we are ready to prove Lemma \ref{lem:dis&noise}.
\begin{proof}[Proof of Lemma \ref{lem:dis&noise}]
For $\{(\lambda_j,\xi_j)\}_{j=1}^{M'}$ following the cubature rule of degree $2m$, denote $\widetilde{V}_{n,M'}(\epsilon):= \sum_{j=1}^{M'}\lambda_j\epsilon_j\phi_n(\xi_j,x)$.
Since
$$\lf\| \widetilde{V}_{n,M'}( \bz_{f,\epsilon})-f \r\|_p 
\leq \|\widehat{V}_{n,M'}(\bz_f)-f \|_p + \|\widetilde{V}_{n,M'}(\epsilon)\|_p,$$ and 
$\|\widehat{V}_{n,M'}(\bz_f)-f \|_p \leq {C_4}n^{-r}$ 
given by \lemref{lem:discrete_ln_app}, we could assert the occurrence of the event $\lf\|\,\widetilde{V}_{n,M'}( \bz_{f,\epsilon})-f\,\r\|_p \geq C_5 n^{-r}$ could guarantee the occurrence of the event $\|\widetilde{V}_{n,M'}(\epsilon)\|_p \geq (C_5-C_4)n^{-r}:=c_3n^{-r}$, which implies
\begin{equation*}
\Prob \lf( \lf\| \widetilde{V}_{n,M'}( \bz_{f,\epsilon})-f \r\|_p \geq C_5 n^{-r} \r)
\leq
\Prob \left(\|\widetilde{V}_{n,M'}(\epsilon)\|_p \geq c_3n^{-r}  \right).
\end{equation*}

Next, we aim to show $\Prob \lf(\|\widetilde{V}_{n,M'}(\epsilon)\|_p \geq c_3 n^{-r} \r)$ could be further bounded by $C_6 n^{-\delta}$.
If we choose $$Z_{\epsilon_j}(x)= {m}^{d-1}\lz_j \ez_j\phi_n(\xi_j,x),$$ 
it is easy to check that $\EE[Z_{\epsilon_j}(x)]=0$ as $\EE[\epsilon_j]=0$ {and 
$\|Z_{\epsilon_j}\|_\fz \leq  n^{d-1}$} by \eqref{equ:range_int_phi2} and \eqref{equ:range_lbd}.
Since $\{\ez_j\}^{M'}_{j=1}$ are sampled within $[-1,1]$,  then the variance $\var\lf(\epsilon_j\r)\leq 1$ for all $j=1,\ldots,M'$. 
Thus, together with the Marcinkiewicz-Zygmund inequality, we obtain
\begin{equation*}
\begin{split}
\sum_{j=1}^{M'} m^{2(d-1)} \lz_j^2 \phi_{n}^2(x,\xi_j)
	\leq & c_4 \sum_{j=1}^{M'} m^{d-1}  |\lz_j|  \phi_{n}^2(x,\xi_j)\\
	\leq & c_4 m^{d-1} \int_{\sph}  \phi_{n}^2(x,\xi_j) \,\md\sz_d(\xi_j)
	\leq c_4 (mn)^{d-1},
\end{split}
\end{equation*}
where $c_4$ comes from the estimate of $\lambda_j$.
{
Therefore, 
\begin{equation*}
\begin{split}
    \sum_{j=1}^{M'} \var\lf(Z_{\ez_j}\r)=\sum_{j=1}^{M'} {m}^{2(d-1)}\lambda_j^2 \phi_n^2(x,\xi_j)\var\lf(\ez_j\r) \leq c_4 (mn)^{d-1}.
\end{split}
\end{equation*}
By the definition of $\{Z_{\ez_j}\}$, we notice that  $\widetilde{V}_{n,M'}(\ez) = \frac{1}{m^{d-1}}\sum_{j=1}^{M'}Z_{\epsilon_j}.$
So by taking $B = c_4 {m}^{d-1}$ in \lemref{lem:ineq_noise}, we obtain
\begin{equation*}
\Prob \lf(\,\|\widetilde{V}_{n,M'}(\ez)\|_p \geq 2\sqrt{\frac{3  (\dz+d-1) n^{d-1} \log n}{{ {m}^{d-1}}}}\,\r) \leq C_6 n^{-\dz}. 
\end{equation*}
}
Finally, we may choose $m$ appropriately such that $12(\dz+d-1)\log n\cdot n^{2r+d-1} \leq {m}^{d-1}$ to conclude that
$$
\Prob \left( \left\| \widetilde{V}_{n,M'}( \bz_{f,\epsilon})-f \right\|_p \geq C_5 n^{-r} \right) \leq \Prob \left(\|\widetilde{V}_{n,M'}(\epsilon)\|_p \geq c_3 n^{-r}  \right) \leq C_6 n^{-\delta}.
$$
This proves \lemref{lem:dis&noise}.
\end{proof}


We are now ready to prove \thmref{thm:discretenoise}.

\begin{proof}[proof of \thmref{thm:discretenoise}]
Following the analogical ideas of previous proofs, we plug in the intermediate term $F(\widetilde{V}_{n,M'}(\bz_{f,\epsilon}))$ and obtain
\begin{align*}
&\sup_{f\in K}|F(f)- D \circ \widetilde{E}(\bz_{f,\epsilon})|\\
&\quad\leq\sup_{f \in K}|F(f)-F(\widetilde{V}_{n,M'}(\bz_{f,\epsilon}))| + \sup_{f \in K}|F(\widetilde{V}_{n,M'}(\bz_{f,\epsilon}))- D \circ \widetilde{E}(\bz_{f,\epsilon})| \\
&\quad\leq\omega_{F}(\|f-\widetilde{V}_{n,M'}(\bz_{f,\epsilon})\|)+ \sup_{y\in[-R',R']^{t_n}}\left|\mu_{F,\phi_n}(y)- D\left(y \right)\right|.
\end{align*}
The first item of the inequality could be further bounded by $\omega_F({C_5}n^{-r})$ with confidence $1-{C_6} n^{-\dz}$ by \lemref{lem:dis&noise}. 
For the second item, it follows from \eqref{equ:app_ctns} and Proposition \ref{prop:w-w}, 
\begin{align*}
\sup_{y\in[-R',R']^{t_n}}|\mu_{F,\phi_n}(y)- D(y)|
\leq C_{n,d}\omega_{\mu_{F,\phi_n}}\left(C_{n,d,R'} \CN^{-\frac{2}{t_n}} \right)
\leq C_{n,d}\omega_F\left(C_{n,d,R'}n^{(d-1)(\frac{1}{2}-\frac{1}{p})}\CN^{-\frac{2}{t_n}}\right),
\end{align*}
which completes the proof of \thmref{thm:discretenoise}.
\end{proof}

	
\section{Conclusion}\label{sec:conclusion}
This paper provides a comprehensive toolkit for approximating continuous functionals defined on $W_p^r(\sph)$ with $1 \leq p \leq \infty$. Three different encoders have been progressively defined to be more in line with practical applications, and the corresponding approximation rates have been established for continuous functionals. We first illustrate the idea of constructing the encoder-decoder network framework to accommodate the infinite-dimensional nature of functional's domain. The encoder $E$ is the composition of a linear operator $V_n: L_p \to \Pi_{2n}^{d-1}(\sph)$ and an isometric isomorphism map $\phi_n :\Pi_{2n}^{d-1}(\sph) \to \RR^{t_n}$, which provides the fully connected neural network decoder $D$ a vectorized input. A further encoder $\widehat{E}=\phi_n \circ \widehat{V}_{n,M}: \RR^M \to \RR^{t_n}$ is given to understand the case when the real-world objects are discretized, where the operator $\widehat{V}_{n,M}: \RR^M \to \Pi_{2n}^{d-1}(\sph)$ is inspired by the cubature formula. Last but not least, we analyze the approximation accuracy in the presence of noise on input objects. Therefore, the input of encoder $\widetilde{E}=\phi_n \circ \widetilde{V}_{n,M'}$ is a set of random variables in $\RR^{M'}$. The decoders of the above situations are all fully connected neural networks mapping from $\RR^{t_n}$ to $\RR$, and we give the approximation rates using our encoder-decoder networks for continuous functional defined on $W_p^r(\sph)$.

{\renewcommand\arraystretch{1.3}
\begin{table}[ht] 
\centering
\begin{tabular}{>{\centering\arraybackslash}p{2cm}|>{}p{12cm}}
\hhline{==}
\textbf{Notation} & \textbf{Description}  \\ 
\hline
$d$   &  the dimension of the unit sphere $\mathbb{S}^{d-1}$.            \\ 
\hline
$n$     &   the degree of the linear operators   $V_{n}$, $\widehat{V}_{n,M}$ and $\widetilde{V}_{n,M'}$.            \\ 
\hline
$m$   & the degree of cubature formulae to discrete the integration.\\
\hline
$J$      &     the depth of DNN.      \\
\hline
$\CN$   & number of free parameters in DNN.  \\
\hline
$t_n$     &        the input dimension of DNN.       \\
\hline
$E$  & an encoder for continuous functionals.\\
\hline
$\widehat{E}$ & an encoder for discrete input.\\
\hline
$\widetilde{E}$ & an encoder for discrete input with noise.\\
\hline
$M$       &  the number of spherical points for encoder $\widehat{E}$.     \\ 
\hline
$M'$       & the number of spherical points for the encoder $\widetilde{E}$.    \\  
\hline
$V_n$       &   the linear operator of degree $n$.     \\
\hline
${ \widehat{V}_{n,M}}$   &  the linear operator with discrete  input.       \\
\hline
$\widetilde{V}_{n,M'}$       &      the   linear operator with discrete and noisy input.   \\
\hhline{==}
\end{tabular}
\caption{Notations in the paper}
\label{table:notation}
\end{table}
}



\newpage
\bibliography{ref}
\bibliographystyle{apalike}
	
\end{document}